\renewcommand\footnotetextcopyrightpermission[1]{} 
\DeclareMathOperator*{\argmin}{argmin}
\begin{document}
\title{Variable Selection and Task Grouping for Multi-Task Learning}

  


\author{Jun-Yong Jeong}
\orcid{0000-0001-8208-4428}
\affiliation{%
  \institution{POSTECH}
  \city{Pohang} 
  \state{South Korea} 
}
\email{june0227@postech.ac.kr}

\author{Chi-Hyuck Jun}
\affiliation{%
  \institution{POSTECH}
  \city{Pohang} 
  \state{South Korea} 
}
\email{chjun@postech.ac.kr}


\begin{abstract}
We consider multi-task learning, which simultaneously learns related prediction tasks, to improve generalization performance. 
We factorize a coefficient matrix as the product of two matrices based on a low-rank assumption.
These matrices have sparsities to simultaneously perform variable selection and learn and overlapping group structure among the tasks.
The resulting bi-convex objective function is minimized by alternating optimization, where sub-problems are solved using alternating direction method of multipliers and accelerated proximal gradient descent. 
Moreover, we provide the performance bound of the proposed method. 
The effectiveness of the proposed method is validated for both synthetic and real-world datasets.
%
\end{abstract}

%
%

\begin{CCSXML}
<ccs2012>
<concept>
<concept_id>10002951.10003227.10003351</concept_id>
<concept_desc>Information systems~Data mining</concept_desc>
<concept_significance>500</concept_significance>
</concept>
<concept>
<concept_id>10010147.10010257.10010258.10010262</concept_id>
<concept_desc>Computing methodologies~Multi-task learning</concept_desc>
<concept_significance>500</concept_significance>
</concept>
</ccs2012>
\end{CCSXML}

\ccsdesc[500]{Computing methodologies~Multi-task learning}
\ccsdesc[500]{Information systems~Data mining}

\keywords{Multi-task learning; Low-rank; Sparse representation; $k$-support norm}

\maketitle
%
\section{Introduction}
Multi-task learning (MTL) refers to simultaneously learning multiple related prediction tasks rather than learning each task independently \cite{MTL:Caruana1997,MTL:survey17}.
Simultaneous learning enables us to share common information among related tasks, and works as an inductive bias to improve generalization performance.
MTL is based on the premise the fact that humans can learn a new task easily when they already have knowledge from similar tasks. 

The major challenges in MTL are how to share common information among related tasks and how to prevent unrelated tasks from being sharing. 
Previous studies achieved this by performing variable selection \cite{MTL:variable_dirty_Jalali2010,MTL:MMTFL_Wang2016}, assuming a low-rank structure \cite{MTL:low-rank_Ando2005,MTL:low_rank:Argyriou2008,MTL:low_rank_group_Kumar2012}, or learning structures among tasks \cite{MTL:grouping_Jacob2008,MTL:groping_TPAMI_Zhou2016,MTL:asmtl_Giwoong2016,MTL:TAT_Han2015}. 

The variable selection approach selects a subset of variables for related tasks \cite{MTL:variable_infnorm_Liu2009,MTL:MMTFL_Wang2016}. 
Traditional studies are based on a strict assumption that selected variables are shared among all tasks \cite{MTL:variable_infnorm_Liu2009,MTL:Obozinski2010}.
Recent studies have suggested a more flexible approach that involves selecting variables by decomposing a coefficient into a shared part and an individual part \cite{MTL:variable_dirty_Jalali2010,MTL:variable_pdirty_Lobato2015} or factorizing a coefficient using a variable specific part and a task-variable part \cite{MTL:MMTFL_Wang2016}. 
Although the variable selection approach provides better interpretability than the other approaches, it has limited ability to share common information among related tasks. 

The low-rank approach assumes that coefficient vectors lie within a low-dimensional latent space \cite{MTL:low-rank_Ando2005,MTL:low_rank:Argyriou2008} and is a representation learning that transform input variables into low-dimensional features and learn coefficient vectors in the feature space \cite{MTL:low_rank_bound_Maurer2016}.
The low-rank approach has also been widely studied in multi-output regression, where entire tasks have real-valued outputs and share the same training set \cite{MOR:JASA_Chen2012}.
It can be achieved by imposing a trace-constraint \cite{MTL:low_rank:Argyriou2008}, encouraging sparsity on the singular values of a coefficient matrix \cite{MTL:low_rank_l1_trace_Richard2012,MTL:low_rank_l1_trace_Richard2013,MTL:low-rank_spectral_ksupport_McDonald2016,MTL:Low-rank_Han2016}, or factorizing a coefficient matrix as the product of a variable-latent matrix and a latent-task matrix \cite{MTL:low-rank_Ando2005,MTL:low_rank:Argyriou2008,MTL:Low_rank_Kang2011,MTL:low_rank_group_Kumar2012,MTL:low_rank_bound_Maurer2016}.
Several studies have shown that the low-rank approach is equivalent to an approach that assumes a group structure among tasks \cite{MTL:low-rank_spectral_ksupport_McDonald2016,MTL:grouping_Zhou2011}.
Thus, recent studies on the low-rank approach have focused on improving the ability of models to learn group structures among tasks \cite{MTL:low_rank_grouping_Kang2011,MTL:low_rank_group_Kumar2012,MTL:low-rank_group_barzilai2015}. 
The low-rank approach provides a flexible way to share common information among related tasks and reduces the effective number of parameters.

It attempts to combine the variable selection approach and the learning of group structures among tasks, especially those based on the low-rank approach.
This combination learns sparse representations to provide better interpretability and shares common information among related tasks in a group to improve generalization performance. 
Previous studies have either partially achieved this goal or have limitations. 
For example, Chen and Huang \cite{MOR:JASA_Chen2012} factorized a coefficient matrix and imposed sparsity between the rows of a variable-latent matrix to perform variable selection. 
They solved multi-output regression and did not explicitly learn a group structure among tasks.
Kumar and Daum{\'e} III \cite{MTL:low_rank_group_Kumar2012} also factorized a coefficient matrix and imposed sparsity within the column vectors of a latent-task matrix to learn overlapping group structures among tasks, but they did not perform variable selection. 
Richard et al. \cite{MTL:low_rank_l1_trace_Richard2012,MTL:low_rank_l1_trace_Richard2013} penalized both a trace norm and an $\ell_1$ norm to simultaneously perform variable selection and impose a low-rank structure. However, a trace norm penalty requires the use of extensive assumptions to ensure a low-rank structure \cite{MTL:low-rank_trace_Mishra2013} and singular value decomposition for each iteration of the optimization.
Han and Zhang \cite{MTL:TAT_Han2015} learned overlapping group structures among tasks by decomposing a coefficient matrix into component matrices, but they could not remove irrelevant variables.
Wang et al. \cite{MTL:MMTFL_Wang2016} factorized a coefficient matrix as the product of full-rank matrices to perform variable selection, but did not
explicitly learn a group structure among tasks.

This paper proposes the variable selection and task grouping-MTL (\textbf{VSTG-MTL}) approach, which simultaneously 
performs variable selection and learns an overlapping group structure among tasks based on the low-rank approach.
Our main ideas are to express a coefficient matrix as the product of a variable-latent matrix and a latent-task matrix and impose sparsities on these matrices. 
The sparsities between and within the rows of a variable-latent matrix help the model to select relevant variables and have flexibility. 
We also encourage sparsity within the columns of a latent-task matrix to learn an overlapping group structure among tasks, and note that learning the latent-task matrix is equivalent to learning task coefficient vectors in a feature space where features can be highly correlated.
This correlation is considered in the model by applying a $k$-support norm \cite{MTL:low-rank_spectral_ksupport_McDonald2016}. 
The resulting bi-convex problem is minimized by alternating optimization, where sub-problems are solved by applying the alternating direction method of multipliers (ADMM) and accelerated proximal gradient descent.
We provide an upper bound on the excess risk of the proposed method to guarantee its performance.  
Experiments conducted on four synthetic datasets and five real-world datasets show that the proposed VSTG-MTL approach outperforms several benchmark MTL methods and that the $k$-support norm is effective on handling the possible correlation. 

We summarize our contributions as follows
\begin{itemize}
\item To the best our knowledge, this is the first work that simultaneously performs variable selection and learns an overlapping group structure among tasks using the low-rank approach. 
\item 
We focus on the possible correlation from a representation learning and apply a $k$ support norm to improve generalization performance. 
\item We present an upper bound on the excess risk of the proposed method.
\end{itemize}
\section{Preliminary}
In this section, we explain multi-task learning, low-rank structures, and $k$-support norms.
\subsection{Low-rank Structure for Multi-task Learning}
Suppose that we are given $D$ variables and $T$ supervised learning tasks, where the $j$-th task has an input matrix $\textbf{X}_j=\bigg[ \left(\textbf{x}_j^1\right)^T,\ldots$, \\ $\left(\textbf{x}_j^{N_j}\right)^T \bigg]^T \in \mathbb{R}^{N_j \times D}$ with $\textbf{x}_j^n \in \mathbb{R}^D$ and an output vector $\textbf{y}_j=\left[y_j^1,\ldots,y_j^{N_j} \right]^T \in \mathbb{R}^{N_j}$.
Next, we focus on a linear relation between input and output 
\begin{equation} \label{eq:lin_mdl}
y_j^n=f(\textbf{w}_j^T\textbf{x}_j^n),
\end{equation}
where $f$ is an identity function for a regression problem $y_j^n\in \mathbb{R}$ or a logit function for a binary classification problem $y_j^n \in \{-1,1\}$ and $\textbf{w}_j \in \mathbb{R}^{D}$ represents a coefficient vector for the $j$-th task. Then, we can describe the matrix $\textbf{W}=[\textbf{w}_1,\ldots,\textbf{w}_T]$ as a coefficient matrix. 

We then impose a low-rank structure on the coefficient matrix \textbf{W} to share common information among related tasks \cite{MTL:low-rank_Ando2005,MTL:low_rank_group_Kumar2012}.
The low-rank structure assumes that the coefficient vectors $\textbf{w}_j$, $j=1,\ldots,T$ lie within a low-dimensional latent space and are expressed by a linear combination of latent bases.
The coefficient matrix \textbf{W} can be factorized as the product of two low rank matrices \textbf{U}\textbf{V},
where $\textbf{U} \in \mathbb{R}^{D\times K}$ is the variable-latent matrix, $\textbf{V} \in \mathbb{R}^{K\times T}$ is the latent-task matrix, and $K << \min\{D,T\}$ is the number of latent bases.
Then, we can express
the coefficient of the $i$-th variable for the $j$-th task $w_{ij}$ and the coefficient vector for the $j$-th task $\textbf{w}_j$ as follows
\begin{gather}
\label{eq:coefficient_factorization}
w_{ij} = \textbf{u}^i\textbf{v}_j
\\
\textbf{w}_{j}=\textbf{U}\textbf{v}_j = \sum_{r=1}^K v_{rj}\textbf{u}_r,
\end{gather}
where $\textbf{u}^i \in \mathbb{R}^{1\times K}$ and $\textbf{u}_r \in \mathbb{R}^D$ are the $i$-th row vector and $r$-th column vector of the variable-latent matrix $\textbf{U}$, respectively,
and $\textbf{v}_j\in \mathbb{R}^K$ is the $j$-th column vector of the variable-latent matrix \textbf{V}. 
The above equations reveal the roles of the two matrices.
The $i$-th row vector $\textbf{u}_i$ and the $r$-th column vector $\textbf{u}^r$ can be regarded as being of equal importance of that of the $i$-th variable and $r$-th latent basis. 
Then, the $j$-th column vector $\textbf{v}_j$ can be regarded as the weighting vector for the $j$-th task.

Furthermore, this low-rank structure can be considered as a representation learning \cite{MOR:JASA_Chen2012,MTL:low_rank_bound_Maurer2016}. Thus, we can rewrite Eq. \eqref{eq:lin_mdl} as
\begin{equation} \label{eq:feature_extraction}
y_j^n=f\left(\textbf{w}_j^T\textbf{x}_j^n\right)=f\left(\textbf{v}_j^T(\textbf{U}^T\textbf{x}_j^n)\right).
\end{equation}
The transpose of the variable-latent matrix $\textbf{U}^T$ and the $j$-th weighting vector $\textbf{v}_j$ represent a linear map from a variable space to a feature space, where $\textbf{x}\in \mathbb{R}^D$ is mapped to $[\textbf{u}_1^T\textbf{x},\ldots,\textbf{u}_K^T\textbf{x}]^T$ $\in \mathbb{R}^K$ and the coefficient vector of the $j$-th task on the feature space, respectively.
We note that unless the latent bases $\textbf{u}_r$, $r=1,\ldots,K$ are orthogonal, the features $\textbf{u}_r^T\textbf{x}, r=1,\ldots,K$ can be highly correlated. 
\subsection{The $k$-support Norm}
We commonly use an $\ell_1$ norm as a convex approximation to an $\ell_0$ norm in regularized regression.
When features are correlated and form several groups, the $\ell_1$ norm penalty tends to select a few features from the groups, where we can improve the generalization performance by selecting all correlated features \cite{ksupport_nips2012}.
In this case, a possible alternative to the $\ell_1$ norm is a $k$-support norm $\|\cdot\| _k^{sp}$, i.e., the tightest convex relaxation of sparsity within a Euclidean ball \cite{ksupport_nips2012}.
The $k$-support norm is defined for each  $\textbf{w}\in\mathbb{R}^{K}$ as follows:
\begin{displaymath}
\|\textbf{w}\|_k^{sp} := \min \left \{ \sum_{g\in \mathcal{G}_k}\|\textbf{s}_g\|:\text{supp}(\textbf{s}_g)\subseteq g,\sum_{g\in\mathcal{G}_k}\textbf{s}_g=\textbf{w}\right \},
\end{displaymath}
where $\mathcal{G}_k$ denotes all subsets of ${1,\ldots,K}$ of cardinality of at most $k$. 
Moreover, $\|\textbf{w}\|_1^{sp}=\|\textbf{w}\|_1$ and $\|\textbf{w}\|_K^{sp}=\|\textbf{w}\|_2$. 
Thus, the $k$-support norm is a trade-off between an $\ell_1$ norm and an $\ell_2$ norm. 
This property can be enhanced by inspecting the following proposition:
\begin{proposition}
\normalfont
(\textit{Proposition 2.1} \cite{ksupport_nips2012}) \textit{For every} \textbf{w} $\in \mathbb{R}^K$,
\begin{displaymath}
\|\textbf{w}\|_k^{sp} = \left(\sum_{l=1}^{k-p-1}\left(|w|_l^{\downarrow}\right)^2
+\frac{1}{p+1} \left(\sum_{l=k-p}^K |w|_l^{\downarrow}\right)^2 \right)^{\frac{1}{2}},
\end{displaymath}
\textit{where} $w_l^{\downarrow}$ \textit{is the} $l$\textit{-th largest element of the absolute values of} \textbf{w}, \textit{letting} $|w|_0^{\downarrow}$ \textit{denote} $+\infty$, \textit{and} $p$ \textit{is the unique integer in} $\left\{ 0,\ldots,k-1\right\}$ \textit{satisfying}
\begin{displaymath}
|w|_{k-p-1}^{\downarrow} > \frac{1}{p+1} \sum_{l=k-p}^{K} |w|_l^{\downarrow} >= |w|_{k-p}^{\downarrow}.
\end{displaymath}
\end{proposition}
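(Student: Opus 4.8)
The plan is to prove the identity by passing to the dual norm and then solving a single finite-dimensional concave maximization in closed form. First I would exploit the invariances of the defining variational problem: permuting the coordinates of $\textbf{w}$ or flipping their signs merely relabels or re-signs the components $\textbf{s}_g$ without changing $\sum_{g}\|\textbf{s}_g\|$ or the support constraints, so $\|\textbf{w}\|_k^{sp}$ depends only on the sorted absolute values of $\textbf{w}$. Hence I may assume $w_1 \ge w_2 \ge \cdots \ge w_K \ge 0$ and need only show $\|\textbf{w}\|_k^{sp} = (\sum_{l=1}^{k-p-1} w_l^2 + \frac{1}{p+1}(\sum_{l=k-p}^K w_l)^2)^{1/2}$.

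Next I would identify the dual norm. By construction the unit ball of $\|\cdot\|_k^{sp}$ is the convex hull of the $\ell_2$-unit balls supported on the index sets $g \in \mathcal{G}_k$, so its support function is $\max_{g \in \mathcal{G}_k}\|\textbf{u}_g\|_2 = (\sum_{l=1}^k (|u|_l^\downarrow)^2)^{1/2}$, i.e.\ the $\ell_2$ norm of the $k$ largest-magnitude entries of $\textbf{u}$. Since a norm equals the support function of its dual unit ball (bipolar theorem), this gives the representation $\|\textbf{w}\|_k^{sp} = \max\{\langle \textbf{w}, \textbf{u}\rangle : \sum_{l=1}^k (|u|_l^\downarrow)^2 \le 1\}$, converting the infimal-convolution definition into a single constrained maximization.

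I would then solve this maximization explicitly. By the rearrangement inequality an optimal $\textbf{u}$ may be taken co-sorted with $\textbf{w}$, so $u_1 \ge \cdots \ge u_K \ge 0$ and the constraint reads $\sum_{l=1}^k u_l^2 \le 1$. I claim the optimizer has a two-block shape: the largest $k-p-1$ entries are free, while the tail entries $u_{k-p},\ldots,u_K$ are forced to a common value $c$ by the ordering constraint. Under this ansatz the constraint becomes $\sum_{l=1}^{k-p-1} u_l^2 + (p+1)c^2 \le 1$ (only the top $k$ entries count, of which $p+1$ lie in the tail) and the objective becomes $\sum_{l=1}^{k-p-1} w_l u_l + c\sum_{l=k-p}^K w_l$; a single application of Cauchy--Schwarz, taking $u_l \propto w_l$ on the head and $c \propto \frac{1}{p+1}\sum_{l \ge k-p} w_l$, reproduces exactly the claimed expression.

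The crux is pinning down the integer $p$ and justifying the two-block ansatz. For this I would invoke the stationarity (KKT) conditions of the concave program: the split is optimal precisely when the proportional head values dominate the common tail value, $u_{k-p-1} \ge c$, while merging the last head coordinate was itself beneficial, $c \ge u_{k-p}$; translating through $u_l \propto w_l$ and $c \propto \frac{1}{p+1}\sum_{l\ge k-p} w_l$ yields the chain $|w|_{k-p-1}^\downarrow > \frac{1}{p+1}\sum_{l=k-p}^K |w|_l^\downarrow \ge |w|_{k-p}^\downarrow$. Uniqueness of $p$ would then follow from a monotonicity (water-filling) argument showing the tail average $\frac{1}{p+1}\sum_{l=k-p}^K w_l$ crosses the decreasing sequence $|w|_{k-p-1}^\downarrow$ exactly once, equivalently from strict concavity of the program forcing a unique optimal $\textbf{u}$ whose block structure fixes $p$. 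I expect this threshold/uniqueness step to be the main obstacle, with the convention $|w|_0^\downarrow = +\infty$ serving to absorb the boundary case $p = k-1$ into the same formula.
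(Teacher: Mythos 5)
The paper you are working against contains no proof of this statement at all: it is quoted verbatim as Proposition~2.1 of the cited $k$-support-norm paper (Argyriou, Foygel and Srebro, NIPS 2012), so there is no ``paper's own proof'' to match. Judged on its own merits, your plan is correct, and it is in fact essentially the route taken in the original source: reduce by sign/permutation invariance to a sorted nonnegative vector, identify the unit ball as the convex hull of $\ell_2$ balls on supports of size at most $k$ so that the dual norm is $\bigl(\sum_{l=1}^{k}(|u|_l^{\downarrow})^2\bigr)^{1/2}$, recover the primal norm by the bipolar theorem, and solve the resulting linear maximization over the dual ball with a head/tail block ansatz, Cauchy--Schwarz, and KKT conditions. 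Two places in your sketch deserve to be filled in. First, the KKT verification is slightly subtler than ``$u_{k-p-1}\ge c \ge u_{k-p}$'': at the optimum $u_{k-p}=c$ exactly, and what the right-hand inequality really certifies is nonnegativity of the multipliers of the ordering constraints inside the merged block, namely $\nu_{k-p+j}=(j+1)\bigl(A_p-\tfrac{1}{j+1}\sum_{i=0}^{j}w_{k-p+i}\bigr)\ge 0$ where $A_p=\tfrac{1}{p+1}\sum_{l=k-p}^{K}w_l$; this follows from $A_p\ge w_{k-p}$ and monotonicity of $w$, and sufficiency of KKT for this concave program then gives global optimality. Second, the existence and uniqueness of $p$, which you flag as the main obstacle, has a clean resolution you should make explicit: the failure of the strict left-hand inequality at $p$ is \emph{equivalent} to the non-strict right-hand inequality holding at $p+1$, and the set of indices where the right-hand inequality holds is an initial segment of $\{0,\ldots,k-1\}$ (if it fails at $q$ it fails at $q+1$, using $w_{k-q-1}\ge w_{k-q}$); since the right-hand inequality always holds at $p=0$ and the left-hand one always holds at $p=k-1$ (via $|w|_0^{\downarrow}=+\infty$), taking $p$ to be the largest index in that initial segment satisfies both conditions, and any second solution would contradict the segment structure. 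With those two steps written out, your argument is a complete and correct proof.
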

The above proposition shows that the $k$-support norm imposes both the uniform shrinkage of an $\ell_2$ norm on the largest components and the spare shrinkage of an $\ell_1$ norm on the smallest components. 
Thus, in a similar way to Elastic net \cite{elastic_net}, the $k$-support norm penalty encourages the selection of a few groups of correlated features and imposes the uniform shrinkage of the $\ell_2$ norm on the selected groups.

\section{Formulation}
\begin{figure} [t]
\label{Fig1.proposed_example}
\centering
\begin{subfigure}[t] {0.15\textwidth}
\includegraphics[width=\textwidth, height= 1.6in]{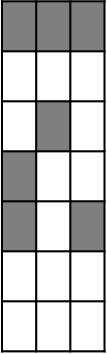} \caption{A variable-latent matrix \textbf{U}}
\end{subfigure}   
\quad
\begin{subfigure}[t] {0.2\textwidth}
\raisebox{.2\textwidth}
{               \includegraphics[width=\textwidth,height=0.6in]{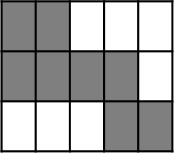}
}
\caption{A latent-task matrix \textbf{V}}
\end{subfigure} 
\\ 
\begin{subfigure}[t] {0.2\textwidth}
\includegraphics[width=\textwidth, height= 1.6in]{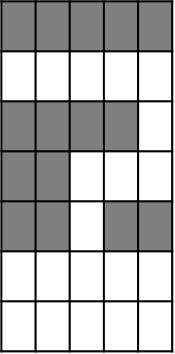}   
\caption{A coefficient matrix \textbf{W}}
\end{subfigure} 


\caption{Example of VSTG-MTL. The gray and white entries express non-zero and zero values, respectively. The feature-latent matrix \textbf{U} shows the sparsities between and within its rows representing variables, and the latent-task matrix \textbf{V} shows the sparsity within its columns representing tasks. The columns of the variable-latent matrix \textbf{U} and the rows of the latent-variable matrix represent latent bases or features}         	
\end{figure}
We aim to simultaneously learn an overlapping group structure among tasks and select relevant variables.
To achieve these goals, we employ the low-rank assumption shown in Section 3.1 and impose sparsities on a variable-latent matrix \textbf{U} and a latent-task matrix \textbf{V}.
Fig. 1 shows an example of VSTG-ML, where the gray and white entries express non-zero and zero values, respectively.
Each row of the variable-latent matrix \textbf{U} and the coefficient matrix \textbf{W} represent a variable. Similarly, each column of the latent-task matrix \textbf{V} and the coefficient matrix \textbf{W} represent a task; 
each column of the variable-latent matrix \textbf{U} and row of the latent-task matrix \textbf{V} represent a latent basis or feature. 
The variable-latent matrix \textbf{U} in Fig. 1(a) shows the sparsities between and within its rows, while the latent-task matrix \textbf{V} in Fig. 1(b) shows the sparsity within its columns. The coefficient matrix \textbf{W} in Fig. 1(c) expresses the product of these matrices. 

The sparsity between the variable importance vectors $\textbf{u}^i$, $i=1,\ldots,D$ induces a model that can be used to select relevant variables \cite{MOR:JASA_Chen2012}. 
If the $i$-th variable importance vector $\textbf{u}^i$ is set to \textbf{0}, then the corresponding variable is removed from the model in accordance with Eq. (2).
For example, in Fig. 1(a), the 2nd, 6th and 7th variables are excluded from the model, whereas the 1st, 3rd, 4th, and 5th variables are selected.
Simultaneously, the sparsity within the variable importance vector $\textbf{u}^i$ improves the flexibility of the model. The latent basis vector $\textbf{u}^r$ does not necessarily depend on all selected variables.
Instead, it can have non-zero values from a subset of the selected variables.

The sparsities within the weighting vectors $\textbf{v}_j$, $j=1,\ldots,T$ learn an overlapping group structure among tasks. 
\cite{MTL:low_rank_group_Kumar2012}.
The group structure among tasks are decided by the sparsity patterns on the weighting vector $\textbf{v}_j$.
Tasks with same sparsity patterns on the weighting vector $\textbf{v}_j$ belong to the same group, whereas those with the orthogonal ones belong to disjoint groups. 
Two groups are regard as being overlapped if their sparsity patterns are not orthogonal, i.e., they partially share the latent bases. 
For example, in Fig. 1(b), the 1st and 2nd tasks belong to the same group and share the 2nd latent basis with the 3rd and 4th tasks. However, they do not share any latent basis with the 5th task.
As mentioned in Sec 2.1, learning the $j$-th weighting vectors $\textbf{v}_j$ is equivalent to learning the coefficient vector of the $j$-th task in a feature space induced by the transpose of the variable-latent matrix $\textbf{U}^T$.
The features $\textbf{u}_r^T\textbf{x}$, $r=1,\ldots,K$ can be highly correlated unless the latent bases are orthogonal. 
Thus, instead of the $\ell_1$ norm, the $k$-support norm is appropriate to encouraging the sparsity within the weighting vector $\textbf{v}_j$.
The $k$-support norm induces the less sparse weighting vector $\textbf{v}_j$ than that from the $\ell_1$ norm and similarly enhances the overlaps in the task groups. 

We formulate the following problem
\begin{gather}
\label{pro:consttrained}
\min_{\textbf{U},\textbf{V}}\sum_{j=1}^T \frac{1}{N_j}L\left( \textbf{y}_j,\textbf{X}_j\textbf{U}\textbf{v}_j\right) 
\nonumber \\
\text{s.t } 
\|\textbf{U} \|_1 \leq \alpha_1,
\quad 
\|\textbf{U}\|_{1,\infty} \leq \alpha_2,
\nonumber \\
\sum_{j=1}^T \left(\|\textbf{v}_j\|_{k}^{sp}\right)^2 \leq \beta
\end{gather}
where $L(\cdot,\cdot)$ is the empirical loss function, which becomes a squared loss $\frac{1}{2}\|\textbf{y}_j-\textbf{X}_j\textbf{U} \textbf{v}_j\|_2^2$  for a regression problem and
a logistic loss $\sum_{j=1}^{N_j} \log \left( 1+\exp\left(-{y_{j}^n\textbf{v}_j^T\textbf{U}^T\textbf{x}_j^n}\right) \right)$ for a binary classification problem;
$\|\textbf{U}\|_1= \sum_{i=1}^D\sum_{r=1}^K|u_{ir}|$ is the $\ell_1$ norm;
$\|\textbf{U}\|_{1,\infty}= \sum_{i=1}^D\|\textbf{u}^i\|_{\infty}$ is the $\ell_{1,\infty}$ norm;
$\|\textbf{v}_j \|_k^{sp}$ is the $k$-support norm; and $\alpha_1,\alpha_2,$ and $\beta$ are the constraint parameters. 
The $\ell_{1,\infty}$ norm and the $\ell_{1}$ norm constraints encourage the sparsities between and within the variable importance vectors $\textbf{u}^i$, $i=1,\ldots,D$.
The squared $k$-support norm constraint encourages the sparsity within the weighting vectors $\textbf{v}_j$, $j=1,\ldots,K$ while considering possible correlations among the features.
\section{Optimization}
The optimization problem (5) is bi-convex for the variable-latent matrix $\textbf{U}$ and latent-task matrix $\textbf{V}$; for a given $\textbf{U}$, it is convex for $\textbf{V}$ and vice versa.
We transform the above constraint problem to the following regularized objective function
\begin{equation} \label{eq:objective}
\sum_{j=1}^T \frac{1}{N_j}L\left(\textbf{y}_j, \textbf{X}_j\textbf{U}\textbf{v}_j\right) 
+ \gamma_1\|\textbf{U}\|_1 + \gamma_2 \|\textbf{U}\|_{1,\infty} + \mu\sum_{j=1}^T \left( \|\textbf{v}_j \|_k^{sp}\right)^2,
\end{equation}
where $\gamma_1, \gamma_2$, and $\mu$ are the regularization parameters.
Then, we apply alternating optimization to obtain the partial minimum of the objective function \eqref{eq:objective}.

Initial estimates of the matrices \textbf{U} and \textbf{V} are crucial in generalization performance considering that the optimization function \eqref{eq:objective} is non-convex.
To compute reasonable initial estimates, for each task, we learn a ridge regression or logistic regression coefficient:
\begin{equation} \label{eq:w_init_vector}
\textbf{w}_j^{init}:= \argmin_{\textbf{w}}\frac{1}{N_j} L\left(\textbf{y}_j,\textbf{X}_j\textbf{w} \right) + \left(\sqrt{\gamma_1^2+\gamma_2^2+\mu^2}\right) \|\textbf{w} \|_2^2.
\end{equation}
We also define an initial coefficient matrix that stacks the ridge coefficient as a column vector:
\begin{equation}
\label{eq:W_init_matrix}
\textbf{W}^{init}=[\textbf{w}_1^{init},\ldots,\textbf{w}_T^{init}].
\end{equation}
Then, we compute the top-$K$ left-singular vectors $\textbf{P}\in \mathbb{R}^{D\times K}$, the top-$K$ right singular vectors $\textbf{Q}\in \mathbb{R}^{T\times K}$, and the top-$K$ singular value matrix $\mathbf{\Sigma} \in \mathbb{R}^{K \times K}$ of the initial coefficient matrix $\textbf{W}^{init}$.
The initial estimates $\textbf{U}^{init}$ and $\textbf{V}^{init}$ are given by $\textbf{P}\mathbf{\Sigma}^{\frac{1}{2}}$ and $\mathbf{\Sigma}^{\frac{1}{2}} \textbf{Q}^T$, respectively. 
\subsection{Updating U} 
For a fixed latent-task matrix \textbf{V}, the objective function for the variable-latent matrix \textbf{U} becomes as follows:
\begin{equation} \label{eq:obj_U_origin}
\sum_{j=1}^T \frac{1}{N_j}L\left(\textbf{y}_j,\textbf{X}_j\textbf{U}\textbf{v}_j\right)+ \gamma_1\|\textbf{U}\|_1 + \gamma_2 \|\textbf{U}\|_{1,\infty}.
\end{equation}
It is solved by applying ADMM \cite{ADMM_Boyd_2011}.
First, we introduce auxiliary variables $\textbf{Z}_h\in \mathbb{R}^{D\times K},\textbf{ }h=1,2,3$ and reformulate the above problem as follows:
\begin{gather}
\label{eq:U_admm_aggregate}
\min_{\textbf{U},\textbf{Z}_1, \textbf{Z}_2, \textbf{Z}_3} \sum_{j=1}^T \frac{1}{N_j}L\left(\textbf{y}_j,\textbf{X}_j\textbf{Z}_1\textbf{v}_j\right) + \gamma_1\|\textbf{Z}_2\|_1 + \gamma_2 \|\textbf{Z}_3\|_{1,\infty} \nonumber \\
\text{s.t } \textbf{AU} + \textbf{BZ} = \textbf{0},
\end{gather}
where 
\begin{displaymath}
\textbf{A} = \begin{bmatrix}
 \textbf{I}_D \\ \textbf{I}_D \\ \textbf{I}_D
\end{bmatrix},
\quad
\textbf{B} = \begin{bmatrix}
-\textbf{I}_D  & \textbf{0} &\textbf{0}\\ \textbf{0} & -\textbf{I}_D & \textbf{0}\\ \textbf{0}& \textbf{0}& -\textbf{I}_D \\
\end{bmatrix}
, \quad \text{and} \quad
\textbf{Z} = \begin{bmatrix}
\textbf{Z}_1 \\ \textbf{Z}_2 \\ \textbf{Z}_3
\end{bmatrix}.
\end{displaymath}
Let $\mathbf{\Lambda}_h$ be a scaled Lagrangian multiplier for the $h$-th auxiliary variables $\textbf{Z}_h$ and $\mathbf{\Lambda} = \left[ \mathbf{\Lambda}_1^T, \mathbf{\Lambda}_2^T, \mathbf{\Lambda}_3^T \right]^T$. 
Then, the variable-latent matrix \textbf{U} is updated as follows:
\begin{equation} \label{eq:update_U}
\begin{aligned}
\textbf{U}^{t+1}&:
=\argmin_{\textbf{U}}
\frac{\rho}{2} \|\textbf{AU} + \textbf{B}\textbf{Z}^t + \mathbf{\Lambda}^t\|_F^2 \\
&= \argmin_{\textbf{U}} \|\textbf{U} - \textbf{Z}_1^t + \mathbf{\Lambda}_1^t \|_F^2 + \|\textbf{U} - \textbf{Z}_2^t + \mathbf{\Lambda}_2^t \|_F^2 \\
&+ \|\textbf{U} - \textbf{Z}_3^t + \mathbf{\Lambda}_3^t \|_F^2\\
&=\frac{1}{3}\sum_{h=1}^{3}\left(\textbf{Z}_h^t - \mathbf{\Lambda}_h^t \right),
\end{aligned}
\end{equation}
where $t$ denotes the iteration and $\rho>0$ is the ADMM parameter.\\
The auxiliary variables $\textbf{Z}_h$, $h=1,2,3$ are updated by solving the following problem
\begin{equation}
\begin{aligned}
\textbf{Z}^{t+1}&:=\argmin_{\textbf{Z}} \sum_{j=1}^T \frac{1}{N_j}L\left(\textbf{y}_j,\textbf{X}_j\textbf{Z}_1\textbf{v}_j\right) + \gamma_1\|\textbf{Z}_2\|_1 + \gamma_2 \|\textbf{Z}_3\|_{1,\infty}\\
&+\frac{\rho}{2}\| \textbf{A}\textbf{U}^{t+1} +\textbf{B}\textbf{Z} + \mathbf{\Lambda}^t\|_F^2.
\end{aligned}
\end{equation}
In detail, the first auxiliary variable $\textbf{Z}_1$ is updated as follows
\begin{equation} \label{eq:update_Z1}
\textbf{Z}_1^{t+1}:=\argmin_{\textbf{Z}_1} \sum_{j=1}^T\frac{1}{N_j}L\left(\textbf{y}_j, \textbf{X}_j\textbf{Z}_1\textbf{v}_j\right) +
\frac{\rho}{2} \|\textbf{U}^{t+1} -\textbf{Z}_1+\mathbf{\Lambda}_1^t \|_F^2.
\end{equation}
For regression problems with a squared loss, we can compute the close-form updating equation by equating the gradient of the optimization problem \eqref{eq:update_Z1} to zero as follows:
\begin{gather*} \label{eq:update_Z1_regress}
\sum_{j=1}^T\frac{1}{N_j}\textbf{X}_j^T\textbf{X}_j\textbf{Z}_1\textbf{v}_j\textbf{v}_j^T + \rho\textbf{Z}_1 =
\sum_{j=1}^T \frac{1}{N_j} \textbf{X}_j^T\textbf{y}_j\textbf{v}_j^T + \rho\left(\textbf{U}^{t+1} + \mathbf{\Lambda^{t}} \right)
\nonumber \\
\left[\sum_{j=1}^T\frac{1}{N_j}\textbf{v}_j\textbf{v}_j^T \otimes \textbf{X}_j^T\textbf{X}_j^T +\rho\textbf{I} \right]\text{vec}(\textbf{Z}_1) \nonumber \\
=\sum_{j=1}^T\frac{1}{N_j}\text{vec}\left(\textbf{X}_j^T\textbf{y}_j\textbf{v}_j^T \right) + \rho\text{vec}\left(\textbf{U}^{t+1} + \mathbf{\Lambda}_1^t \right),
\end{gather*}
where $\text{vec}(\cdot)$ is the vectorization operator and $\otimes$ is the Kronecker product. The above linear system of equations is solved by using the Cholesky or LU decomposition.\\
For binary classification problems with a logistic loss, it is solved by using L-BFGS \cite{Numerical_opt}, where the gradient is given as follows:
\begin{displaymath}
\nabla_{\textbf{Z}_1} = \sum_{j=1}^T\sum_{n=1}^{N_j}\frac{-y_j^n\textbf{x}_j^n\textbf{v}_j^T}{1+\exp{\left( y_j^n\textbf{v}_j^T\textbf{Z}_1^T\textbf{x}_j^n\right)}}
+ \rho \left( \textbf{Z}_1 -\textbf{U}^{t+1}-\mathbf{\Lambda}_1^{t} \right).
\end{displaymath}
\\
The other auxiliary variables  $\textbf{Z}_2$ and  $\textbf{Z}_3$ are updated as follows:
\begin{equation} \label{eq:update_Z2}
\begin{aligned}
\textbf{Z}_2^{t+1}&:=\argmin_{\textbf{Z}_2} \gamma_1\|\textbf{Z}_2\|_1 + \frac{\rho}{2} \|\textbf{U}^{t+1}-\textbf{Z}_2 + \mathbf{\Lambda}_2^t \|_F^2 
\\
&= \text{prox}_{\frac{\gamma_1}{\rho}\| \cdot \|_1}\left( \textbf{U}^{t+1} + \mathbf{\Lambda}_2^t \right),
\end{aligned}
\end{equation}
\begin{equation} \label{eq:update_Z3}
\begin{aligned}
\textbf{Z}_3^{t+1}&:=\argmin_{\textbf{Z}_3} \gamma_2 \|\textbf{Z}_3\|_{1,\infty} + \frac{\rho}{2}\|\textbf{U}^{t+1} -\textbf{Z}_3 + \mathbf{\Lambda}_3^t \|_F^2
\\
&=\text{prox}_{\frac{\gamma_2}{\rho}\|\cdot\|_{1,\infty}}\left(\textbf{U}^{t+1} + \mathbf{\Lambda}_3^t \right),
\end{aligned}
\end{equation}
where $\text{prox}_{\lambda\|\cdot\|_1}(\cdot)$ 
and $\text{prox}_{\lambda\|\cdot\|_{1,\infty}}(\cdot)$ are the proximal operators of an $\ell_1$ norm and an $\ell_{1,\infty}$ norm, respectively, which are shown in \cite{proximal_FTO_2014}.\\
The Lagrangian multipliers $\mathbf{\Lambda}_h$, $h=1,2,3$ are updated as follows:
\begin{equation} \label{eq:update_Lambda}
\mathbf{\Lambda}_h^{t+1}:=\mathbf{\Lambda}_h^{t+1}+\textbf{U}^{t+1} - \textbf{Z}^{t+1}_h.
\end{equation}
Then, the primal and dual residuals $\textbf{r}^{t+1}$ and $\textbf{s}^{t+1}$ are given by
\begin{equation} \label{eq:resi_primal}
\textbf{r}^{t+1}=\textbf{A}\textbf{U}^{t+1} + \textbf{B}\textbf{Z}^{t+1}
\end{equation}
\begin{equation} \label{eq:resi_dual}
\textbf{s}^{t+1} = \rho\textbf{A}^T\textbf{B}\left( \textbf{Z}^{t+1} - \textbf{Z}^t\right)
\end{equation}
Note that the updating equation for the variable-latent matrix \textbf{U} in Eq. \eqref{eq:update_U} does not guarantee sparsity. 
Thus, after convergence, the final variable-latent matrix \textbf{U} is given by the second auxiliary variable $\textbf{Z}_2$, which guarantees sparsity due to the proximal operator of the $\ell_1$ norm.
\subsection{Updating V}
\begin{table}[t]
\label{my-label}
\begin{tabular}{l}
\hline
\textbf{Algorithm1} VSTG-MTL\\ \hline
\textbf{input}\\
$\textbf{X}_j$ and $\textbf{y}_j$: training data for task $j=1,\ldots,T$
\\
$K$: number of latent bases
\\
$\gamma_1,\gamma_2,\mu$: regularization parameters
\\
$k$: parameter for the $k$-support norm
\\
$\rho$: parameter for ADMM
\\ \hline
\textbf{output}
\\
\textbf{U}: variable-latent matrix
\\
\textbf{V}: latent-task matrix
\\
\textbf{W}=\textbf{UV}: Coefficient matrix
\\
\hline
\textbf{procedure}
\\
\textbf{1}. Estimate an initial coefficient matrix $\textbf{W}^{init}$ 
\\ by using Eqs. \eqref{eq:w_init_vector} and \eqref{eq:W_init_matrix}.
\\ \textbf{2}. Compute the top-$K$ left singular vectors \textbf{P}, the top-$K$ right \\singular vectors \textbf{Q}, and the top-$K$ singular value matrix $\mathbf{\Sigma}$
\\ $\textbf{W}^{init} = \textbf{P}\mathbf{\Sigma}\textbf{Q}^T$.
\\
\textbf{3}. Estimate initial estimates for $\textbf{U}^{0}$ and $\textbf{V}^{0}$ as follows: \\
$\textbf{U}^{0} = \textbf{P}\mathbf{\Sigma}^{\frac{1}{2}}$ and $\textbf{V}^{0} = \mathbf{\Sigma}^{\frac{1}{2}}\textbf{Q}$ 
\\
\textbf{4}. \textbf{Repeat step 5 to 13.} 
\\ \textbf{5}. \quad \textbf{Repeat step 6 to 8.}
\\ \textbf{6}. \qquad Update the variable-latent matrix \textbf{U} by using Eq. \eqref{eq:update_U}.
\\ \textbf{7}. \qquad Update the auxiliary variables $\textbf{Z}_h$, $h=1,2,3$ \\
by solving Eqs. \eqref{eq:update_Z1}, \eqref{eq:update_Z2}, and \eqref{eq:update_Z3}.
\\ \textbf{8}. \qquad Update scaled Lagrangian multipliers $\mathbf{\Lambda}_h$, $h=1,2,3$ 
\\by using Eq. \eqref{eq:update_Lambda}.
\\ \textbf{9}. \quad \textbf{until} the Frobeneus norms of $\textbf{r}$ and $\textbf{s}$ in Eqs. \eqref{eq:resi_primal} and \eqref{eq:resi_dual} \\
converge.
\\ \textbf{10}. \quad Set the variable-latent matrix \textbf{U} to be equal\\to the second auxiliary variable $\textbf{Z}_2$.
\\ \textbf{11}. \quad \textbf{for} $j=1,\ldots,T$ \textbf{do}
\\ \textbf{12}. \quad \quad Update the weighting vector $\textbf{v}_j$ by solving Eq. \eqref{eq:update_v}. 
\\ \textbf{13}. \quad \textbf{end for}
\\ \textbf{14}. \textbf{until} the objective function in Eq. \eqref{eq:objective} converges.
\\
\hline
\end{tabular}
\end{table}
For a fixed variable-latent matrix \textbf{U}, the problem for the latent-task matrix \textbf{V} is separable into its column vector $\textbf{v}_j$ as follows:
\begin{equation} \label{eq:update_v}
\textbf{v}_j = \argmin_{\textbf{v}} \frac{1}{N_j}L(\textbf{y}_j, \textbf{X}_j\textbf{U}\textbf{v}) + \mu\left(\|\textbf{v}\|_{k}^{sp}\right)^2
\end{equation}
The $j$-th weighting vector $\textbf{v}_j$ is updated by solving the $k$-support norm regularized regression or logistic regression, where a input matrix becomes $\textbf{X}_j\textbf{U}$.
The above problem is solved using accelerated proximal gradient descent \cite{FISTA:Beck2009}, where the proximal operator for the squared $k$-support norm $\text{prox}_{\lambda \left(\|\cdot\|_k^{sp}\right)^2}(\cdot)$ is given by \cite{ksupport_nips2012}. 
\subsection{Algorithm}
Algorithm 1 summarizes the procedure to optimize the objective function \eqref{eq:objective}.
We set the ADMM parameter $\rho$ to 2 and consider that the ADMM for updating \textbf{U} converges if  $\|\textbf{r}^{t+1}\|_F\leq 0.01$ and $\|\textbf{s}^{t+1}\|_F\leq 0.01$.
\section{Theoretical analysis}
In this section, we provide an upper bound on excess error of the proposed method based on the previous work from Maurer et al. \cite{MTL:low_rank_bound_Maurer2016}. 

Suppose $\mu_1,\ldots,\mu_T$ be probability measures on $\mathbb{R}^D \times \mathbb{R}$. 
Then, the input matrix $\textbf{X}_j$ and the output vector $\textbf{y}_j$ are drawn from the probability measure $\mu_j$ with $N_j=N$. 
We express $\bar{\textbf{X}}=[\textbf{X}_1,\ldots,\textbf{X}_T]$.

The optimization problem (5) is reformulated as follows: 
\begin{equation}
\label{prob:trans}
\min_{\textbf{U} \in \mathcal{H}, \textbf{v}_j \in \mathcal{F}} 
\frac{1}{NT} \sum_{j=1}^T L'\left( \textbf{y}_j,\textbf{X}_j\textbf{U}\textbf{v}_j\right), 
\end{equation}
where $\mathcal{H}= \Big\{ \textbf{x} \in \mathbb{R}^D \to \left(\textbf{u}_1^T\textbf{x},\ldots,\textbf{u}_K^T\textbf{x}\right)\in \mathbb{R}^K:\textbf{u}_1,\ldots,\textbf{u}_K \in \mathbb{R}^D, \sum_{r=1}^K \|\textbf{u}_r\|_{1} \leq \alpha_1, \sum_{i=1}^D \|\textbf{u}^i\|_{1,\infty}\leq \alpha_2 \Big\}$, and  
$\mathcal{F} = 
\Big\{ \textbf{z}\in \mathbb{R}^K \to \textbf{v}^T\textbf{z} \in \mathbb{R}:
\textbf{v} \in \mathbb{R}^K, 
\left(\|\textbf{v} \|_{sp}^{k} \right)^2 \leq \beta^2
\Big\}$, and $L'$ is the scaled loss function in $[0,1]$.
We are interested in the expected error given by
\begin{displaymath}
\varepsilon \left(\textbf{U},\textbf{v}_1,\ldots,\textbf{v} _K\right) := \frac{1}{T} \sum_{j=1}^{T} \mathbb{E}_{\left(\textbf{X}_j,\textbf{y}_j\right) \sim \mu_j}
L'\left(\textbf{y}_j,\textbf{X}_j\textbf{U}\textbf{v}_j\right).
\end{displaymath}
Let $\hat{\textbf{U}}$ and $\hat{\textbf{v}}_1,\ldots,\hat{\textbf{v}}_K$ be the optimal solution of the optimization problem \eqref{prob:trans}, then we have the following theorem
\begin{theorem}
\normalfont
\textbf{(Upper bound on excess error).} \textit{If} $\alpha_1^2\leq K$, \textit{with probability at least  1- $\delta$ in} $\bar{\textbf{X}}$ \textit{the excess error is bounded by}
\begin{gather*}
\varepsilon \left(\hat{\textbf{U}},\hat{\textbf{v}}_1,\ldots,\hat{\textbf{v}}_K \right) -
\min_{\textbf{U} \in \mathcal{H}, \textbf{v}_j\in \mathcal{F}} \varepsilon \left(\textbf{U},\textbf{v}_1,\ldots,\textbf{v}_K \right)
\nonumber \\ 
\leq c_1\beta K\sqrt{\frac{\|\hat{C}\left(\bar{\textbf{X}}\right)\|_1}{NT}} 
+ c_2\beta \sqrt{\frac{K\|\hat{C}\left( \bar{\textbf{X}}\right)\|_{\infty}}{N}} + \sqrt{\frac{8\ln(2/\delta)}{NT}},
\end{gather*}
\textit{where } 
$\|\hat{C}\left(\bar{\textbf{X}}\right)\|_1=\frac{1}{T}\sum_{j=1}^T tr\left(\hat{\Sigma}(\textbf{X}_j)\right)$,
$\|\hat{C}\left( \bar{\textbf{X}}\right)\|_{\infty} = \frac{1}{T}\sum_{j=1}^T \lambda_{max}\\
\left(\hat{\Sigma}(\textbf{X}_j)\right)$, 
$\hat{\Sigma}(\textbf{X}_j)$ \textit{is the empirical covariance of input data for the} $j$\textit{-th task,} $\lambda_{max}(\cdot)$ \textit{is the largest eigenvalue, and } $c_1$ \textit{and} $c_2$ \textit{are universal constants}.  
\end{theorem}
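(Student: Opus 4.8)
The plan is to follow the representation-learning analysis of Maurer et al.\ \cite{MTL:low_rank_bound_Maurer2016} and to control the excess error through the empirical Gaussian complexity of the composite hypothesis class. First I would apply the standard symmetrization argument together with the bounded-difference (McDiarmid) inequality to the average loss over all $NT$ samples. Because $L'$ takes values in $[0,1]$, altering a single sample changes the average loss by at most $1/(NT)$; this produces the concentration term $\sqrt{8\ln(2/\delta)/(NT)}$ and reduces the excess error to (twice) the empirical Gaussian complexity of the loss-composed class $\{(\textbf{X}_j,\textbf{y}_j)\mapsto L'(\textbf{y}_j,\textbf{X}_j\textbf{U}\textbf{v}_j)\}$ ranging over $\textbf{U}\in\mathcal{H}$ and $\textbf{v}_j\in\mathcal{F}$. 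Since the bound is data-dependent, it is essential to keep the complexity \emph{empirical} in $\bar{\textbf{X}}$ throughout.

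Next I would strip the loss. Assuming $L'$ is $1$-Lipschitz in its second argument (true for the logistic loss and for the squared loss on bounded predictions, with any constant absorbed into $c_1,c_2$), the Ledoux--Talagrand contraction principle bounds the Gaussian complexity of the loss class by that of the raw linear class $\{\textbf{x}\mapsto\textbf{v}^T\textbf{U}^T\textbf{x}\}$. This is a two-layer composition: a shared feature map $\textbf{U}^T$ from $\mathcal{H}$ followed by a per-task predictor $\textbf{v}_j$ from $\mathcal{F}$. The central step is then the Maurer-type decomposition of this composite Gaussian average into a \emph{predictor} contribution and a \emph{representation} contribution, carried out via a Slepian/Gordon Gaussian-process comparison. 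The predictor contribution sees all $NT$ Gaussian variables and hence scales as $1/\sqrt{NT}$, whereas the representation is shared and its complexity is amortized across tasks, scaling as $1/\sqrt{N}$; this asymmetry is exactly what produces the two distinct denominators in the statement.

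To bound the two pieces I would exploit the norm constraints. Since $\|\textbf{v}\|_K^{sp}=\|\textbf{v}\|_2$ and the $k$-support norm is nonincreasing in $k$, we have $\|\textbf{v}\|_2\le\|\textbf{v}\|_k^{sp}$, so the constraint $(\|\textbf{v}\|_k^{sp})^2\le\beta^2$ gives $\|\textbf{v}\|_2\le\beta$ and the inner predictor collapses to an $\ell_2$-ball of radius $\beta$ in $K$ dimensions. Its Gaussian complexity is $\beta$ times the expected norm of $\sum_{j,n}\gamma_{jn}\textbf{U}^T\textbf{x}_j^n$, which by Jensen is controlled by the summed feature energies $\sum_{j,n}\|\textbf{U}^T\textbf{x}_j^n\|_2^2=N\sum_j\mathrm{tr}(\textbf{U}^T\hat{\Sigma}(\textbf{X}_j)\textbf{U})$; after the covariance bookkeeping this yields the factor $\beta K\sqrt{\|\hat{C}(\bar{\textbf{X}})\|_1/(NT)}$, the trace $\|\hat{C}(\bar{\textbf{X}})\|_1$ capturing the total feature energy. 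For the representation term I would bound the Gaussian average over $\mathcal{H}$ under the $\ell_1$ and $\ell_{1,\infty}$ constraints, where the worst-case direction is governed by the largest eigenvalue of each empirical covariance and so introduces $\|\hat{C}(\bar{\textbf{X}})\|_\infty$. Here the hypothesis $\alpha_1^2\le K$ is used precisely: it gives $\|\textbf{U}\|_F^2=\sum_r\|\textbf{u}_r\|_2^2\le\sum_r\|\textbf{u}_r\|_1^2\le(\sum_r\|\textbf{u}_r\|_1)^2\le\alpha_1^2\le K$, which calibrates the representation complexity to the stated $\beta\sqrt{K\|\hat{C}(\bar{\textbf{X}})\|_\infty/N}$ scale. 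Combining the two contributions with the concentration term yields the claimed bound.

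I expect the main obstacle to be the third step: the clean separation of the composite Gaussian complexity into the $1/\sqrt{NT}$ predictor term and the $1/\sqrt{N}$ representation term, with the correct, $T$-independent amortization of the shared map. The accompanying difficulty is translating the nonstandard geometries — the $\ell_1/\ell_{1,\infty}$ structure of $\mathcal{H}$ and the $k$-support structure of $\mathcal{F}$ — into the two data-dependent quantities $\|\hat{C}(\bar{\textbf{X}})\|_1$ and $\|\hat{C}(\bar{\textbf{X}})\|_\infty$ rather than into crude dimension-dependent surrogates, since it is this bookkeeping that keeps $c_1$ and $c_2$ universal.
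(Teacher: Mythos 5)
Your core reduction is exactly the paper's proof. The paper verifies precisely the two inequalities you state --- $\sum_{r=1}^K\|\hat{\textbf{u}}_r\|_2^2 \leq \sum_{r=1}^K\|\hat{\textbf{u}}_r\|_1^2 \leq \alpha_1^2 \leq K$ (so $\|\hat{\textbf{U}}\|_F \leq \sqrt{K}$) and $\|\hat{\textbf{v}}_j\|_2 \leq \|\hat{\textbf{v}}_j\|_k^{sp} \leq \beta$ --- and then cites Lemma 3 and Theorem 4 of Maurer et al.\ \cite{MTL:low_rank_bound_Maurer2016} as a black box; it never re-derives any of the Gaussian-complexity machinery. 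So the part of your proposal that carries the theorem is present and correct.

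However, the portion where you unpack Maurer et al.'s analysis has the decomposition backwards, and as written that step would fail if you actually executed it. In their chain rule it is the \emph{shared representation} whose complexity is amortized over all $NT$ samples: the Gaussian average of $\mathcal{H}$ evaluated on the pooled data, multiplied by the Lipschitz constant $\beta$ of the linear predictors, is what produces the trace term $c_1\beta K\sqrt{\|\hat{C}(\bar{\textbf{X}})\|_1/(NT)}$. The \emph{per-task predictors} each see only their own $N$ samples, so their contribution cannot be amortized across tasks; it yields the $1/\sqrt{N}$ term, with the largest eigenvalue entering through $\sup_{\textbf{U}}\mathrm{tr}\left(\textbf{U}^T\hat{\Sigma}(\textbf{X}_j)\textbf{U}\right) \leq \|\textbf{U}\|_F^2\,\lambda_{max}\left(\hat{\Sigma}(\textbf{X}_j)\right) \leq K\lambda_{max}\left(\hat{\Sigma}(\textbf{X}_j)\right)$. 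You assert the opposite on both counts (predictors at $1/\sqrt{NT}$ producing the trace term, representation at $1/\sqrt{N}$ producing the eigenvalue term), which also contradicts your own amortization intuition: a shared object gets the $1/\sqrt{NT}$ rate, not the $1/\sqrt{N}$ one. Relatedly, your predictor computation --- $\beta$ times the norm of a single Gaussian sum $\sum_{j,n}\gamma_{jn}\textbf{U}^T\textbf{x}_j^n$ --- is the complexity of one predictor shared by all tasks; the actual class, with a separate $\textbf{v}_j$ per task, is strictly larger, so this is not a valid upper bound for it. None of this invalidates the theorem, since once your two norm inequalities are established the result follows by direct citation, which is all the paper does; but if you intend to write out the derivation rather than cite it, the two contributions must be re-attributed as above.
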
 
\begin{proof}
We can show that 
$\sum_{r=1}^K\|\hat{\textbf{u}}_r\|_2^2 \leq \sum_{r=1}^K\|\hat{\textbf{u}}_r\|_1^2 \leq \alpha_1^2 \leq K$
and $\|\hat{\textbf{v}}\|_2^2 \leq
\left(\| \hat{\textbf{v}}\|_{k}^{sp}\right)^2 \leq \beta^2$ considering $\hat{\textbf{U}} \in \mathcal{H}$ and $\hat{\textbf{v}}_j \in \mathcal{F}$. 
Then, the optimization problem \eqref{prob:trans} satisfies the conditions on Lemma 3 and Theorem 4 in \cite{MTL:low_rank_bound_Maurer2016} and the result follows. 
\end{proof}
The above theorem shows the roles of the hyper-parameters. 
The constraint parameters $\alpha_1$ and $\beta$ should be low enough to satisfy $\alpha_1^2 \leq K$, and produce a tighter bound. Thus, the corresponding regularization parameters $\gamma_1$ and $\mu$ should be large enough to fulfill the above condition and tighten the bound. 
\section{Experiment}
In this section, we present experiments conducted to evaluate the effectiveness of our proposed method. We compare our proposed methods with the following benchmark methods:
\begin{itemize}
\item \textbf{LASSO method}: This single-task learning method learns a sparse prediction model for each task independently. 
\item \textbf{L1+trace norm \cite{MTL:low_rank_l1_trace_Richard2012}}: This MTL method simultaneously achieves a low-rank structure and variable selection by penalizing both the nuclear norm and the $\ell_1$ norm of the coefficient matrix.  
\item \textbf{Multiplicative multi-task feature learning (MMTFL) \\ \cite{MTL:MMTFL_Wang2016}}: This MTL method factorizes a coefficient matrix as the product of full rank matrices to select the relevant input variables. In this paper, we set $p=1$ and $k=2$.
\item \textbf{Group overlap multi-task learning (GO-MTL) \cite{MTL:low_rank_group_Kumar2012}}: This MTL method factorizes a coefficient matrix as the product of low-rank matrices and learn an overlapping group structure among tasks by imposing sparsity on the  weighting vectors.
\end{itemize}
The hyper-parameters of all methods are selected by minimizing the error from an inner 10-fold cross validation step or a validation set. 
To reduce the computational complexity of the proposed method, we set the third regularization parameter $\mu$ to be equal to the first regularization parameter $\gamma_1$.
The regularization parameters of all methods are selected from the search grid $\{2^{-10},\ldots,2^{3}\}$. For GO-MTL and VSTG-MTL, the number of latent bases $K$ is selected from the search grid $\{3,5,7,9,11,13\}$.
For the synthetic datasets, the value of $k$ is set to $1$ (\textbf{VSTG-MTL $k$=1}), which is equivalent to the squared $\ell_1$ norm, or $3$ (\textbf{VSTG-MTL $k$=3}) to identify the effectiveness of the $k$-support norm for correlated features. 
In the real datasets, it is selected from $\{1,3,5,7\}$ (\textbf{VSTG-MTL $k$=opt}).
The Matlab implementation of the proposed method is available at the following URL: \url{https://github.com/JunYongJeong/VSTG-MTL}.

The evaluation measurements approach used are the root mean squared error (\textbf{RMSE}) for a regression problem and the error rate (\textbf{ER}) for a classification problem.
For synthetic datasets, we also compute the relative estimation error (\textbf{REE}) $\|\textbf{W}^*-\hat{\textbf{W}}\|_F/\|\textbf{W}^*\|_F$, 
where $\textbf{W}^*$ is the true coefficient matrix and $\hat{\textbf{W}}$ is the estimated one.
We repeat the experiments 10 times and compute the mean and standard deviation of the evaluation measurement.
We also perform a Wilcoxon signed rank test with $\alpha=0.05$, which is a non-parametric paired t-test, to find the best model statistically. The statistically best models are highlighted in bold  in Tables 1 and 2. 
\subsection{Synthetic Datasets}

We generate the following four synthetic datasets.
We use 25-dimensional variables ($D=25$) and 20 tasks ($T=20$). 
For the $j$-th task, we generate 50 training observations and 100 test observations from $\textbf{x}_j^n \sim \mathcal{N}(\textbf{0},\textbf{I}_{D})$ and $y_j^n=\textbf{w}_j^T\textbf{x}_j^n+ \mathcal{N}(0,1)$. A true coefficient matrix $\textbf{W}^*=[\textbf{w}_1^*,\ldots,\textbf{w}_T^*]$ has a low-rank structure $K:=rank(\textbf{W})=5$ and is estimated by $\textbf{U}\textbf{V}$, where $\textbf{U}\in \mathbb{R}^{D\times K}$ and $\textbf{V} \in \mathbb{R}^{K\times T}$. Each synthetic dataset differs on the structure of the two matrices $\textbf{U}$ and $\textbf{V}$. 
\begin{itemize}
\item {\textbf{Syn1. Orthogonal features and disjoint task groups}}
\\: For $r=1,\ldots,K$, the latent basis $\textbf{u}_r$ only has non-zero values from the $\left(4r-3\right)$-th to the $4r$-th components.
The non-zero values are generated through a normal distribution with mean 1.0 and std 0.25.
Similarly, the weighting vectors $\textbf{v}_{4r-3}, \ldots,\textbf{v}_{4r}$ only have nonzero values on the $r$-th component.
The nonzero values are generated through a uniform distribution from 1 to 1.5. 
Thus, the last five variables are irrelevant.
The latent bases $\textbf{u}_r$, $r=1,\ldots,K$, as well as the corresponding features, are orthogonal to each other.
Each latent basis $\textbf{u}_r$ forms a disjoint group, where each group consists of four variables and tasks. 
\item {\textbf{Syn2. Orthogonal features and overlapping task groups}}
\\: The variable-latent matrix \textbf{U} is generated by the same procedure as that shown in Syn1. 
For $r=1,\ldots,K-1$, the weighting vectors $\textbf{v}_{4r-3},\ldots,\textbf{v}_{4r}$ only have
nonzero values on the $r$-th and $(r+1)$-th components.
The last four weighting vectors $\textbf{v}_{4K-3},\ldots,\textbf{v}_{4K}$ only have the nonzero values on the $\left(K-1\right)$-th and $K$-th components.
The nonzero values are generated using the same uniform distribution as that used in Syn1.
Then, the last five variables are irrelevant and the features are still orthogonal.
The tasks have $K$ overlapping groups, where each group consists of four variables and five tasks. 
\item{\textbf{Syn3. Correlated features and disjoint task groups}}
\\: For $r=1,\ldots,K$, the latent basis $\textbf{u}_r$ only has nonzero values from the $\left(3r-2\right)$-th to the $\left( 3r+3 \right)$-th components.
The nonzero values are generated using the same normal distribution as that used in Syn1.
The latent-task matrix \textbf{V} is generated using the same procedure as that used in Syn1.
The last seven variables are irrelevant and the latent bases are not orthogonal, resulting in correlation among features.
The tasks have $K$ disjoint groups, where each group consists of six variables and four tasks.
\item {\textbf{Syn4. Correlated features and overlapping task groups}}
\\: The variable-latent matrix $\textbf{U}$ is generated using the same procedure as that used in Syn3. 
The latent-task matrix $\textbf{V}$ is generated using the same procedure as that used in Syn2.
The last seven input variables are irrelevant. Thus, the features are correlated and the tasks have $K$ overlapping groups, where each group consists of six variables and five tasks.
\end{itemize}
\begin{table*}[ht]
\centering
\caption{Results for the synthetic datasets showing the average RMSE and REE with 10 repetitions. The statistically best models are highlighted in bold.}  
\begin{tabular}{llccccccc}
\hline
Synthetic & Measure & LASSO & L1+trace & MMTFL & GO-MTL & VSTG-MTL $k=1$ & VSTG-MTL $k=3$
\\ \hline
Syn1 & RMSE & 1.4625 $\pm$ 0.1349 & 1.1585 $\pm$ 0.0180 & 1.1384 $\pm$ 0.0257 & 1.0935 $\pm$ 0.0185 & \textbf{1.0456 $\pm$ 0.0228} &  1.0766 $\pm$ 0.0176\\
          & REE & 0.4155 $\pm$ 0.0595 & 0.2249 $\pm$ 0.0200 & 0.2089 $\pm$ 0.0169 & 0.1737 $\pm$ 0.0165 & \textbf{0.1226 $\pm$ 0.0149} & 0.1536 $\pm$ 0.0128 \\ \hline
Syn2      & RMSE    & 1.6811 $\pm$ 0.1146 & 1.2639 $\pm$ 0.0418 & 1.2377 $\pm$ 0.0401 & 1.1509 $\pm$ 0.0267 & 1.1294 $\pm$ 0.0332 & 1.1314 $\pm$ 0.0275 \\ 
          & REE & 0.3703 $\pm$ 0.0441 & 0.2040 $\pm$ 0.0169 & 0.1921 $\pm$ 0.0152 & 0.1488 $\pm$ 0.0122 & 0.1365 $\pm$ 0.0135 & 0.1376 $\pm$ 0.0091 \\ \hline
Syn3 & RMSE & 1.5303 $\pm$ 0.0483 & 1.2244 $\pm$ 0.0320 & 1.1797 $\pm$ 0.0287 & 1.1129 $\pm$ 0.0250 & 1.1086 $\pm$ 0.0192 & \textbf{1.1020 $\pm$ 0.0226} \\
    & REE & 0.3801 $\pm$ 0.0328 & 0.2262 $\pm$ 0.0211 & 0.2001 $\pm$ 0.0168 & 0.1565 $\pm$ 0.0148 & 0.1538 $\pm$ 0.0123 & \textbf{0.1486 $\pm$ 0.0121} \\ \hline
Syn4      & RMSE    &  1.7380 $\pm$ 0.1032 & 1.2673 $\pm$ 0.0312 & 1.2271 $\pm$ 0.0309 &  1.1278 $\pm$ 0.0235 & 1.1139 $\pm$ 0.0236 & \textbf{1.1085 $\pm$ 0.0214}
\\          & REE     & 0.2729 $\pm$ 0.0365 & 0.1419 $\pm$ 0.0125 & 0.1302 $\pm$ 0.0111 & 0.0945 $\pm$ 0.0087 & 0.0895 $\pm$ 0.0094 & \textbf{0.0868 $\pm$ 0.0083}
\\ \hline
\end{tabular}
\end{table*}
\begin{figure*}[t]
\begin{subfigure}[t]{0.24\textwidth}
\includegraphics[width=\textwidth, height=1.6in]{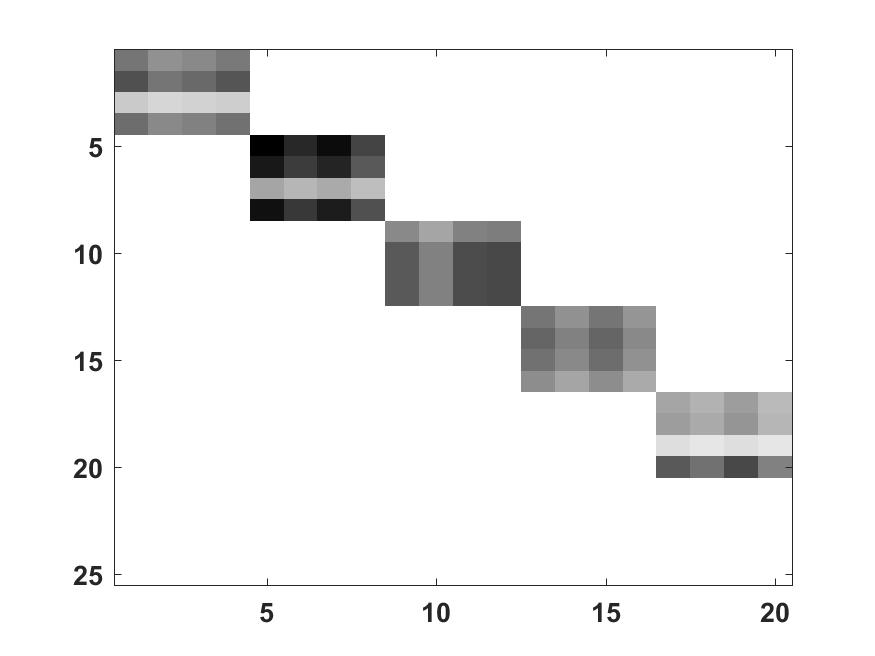}
\caption{Syn1 true}
\end{subfigure}
\begin{subfigure}[t]{0.24\textwidth}
\includegraphics[width=\textwidth, height=1.6in]{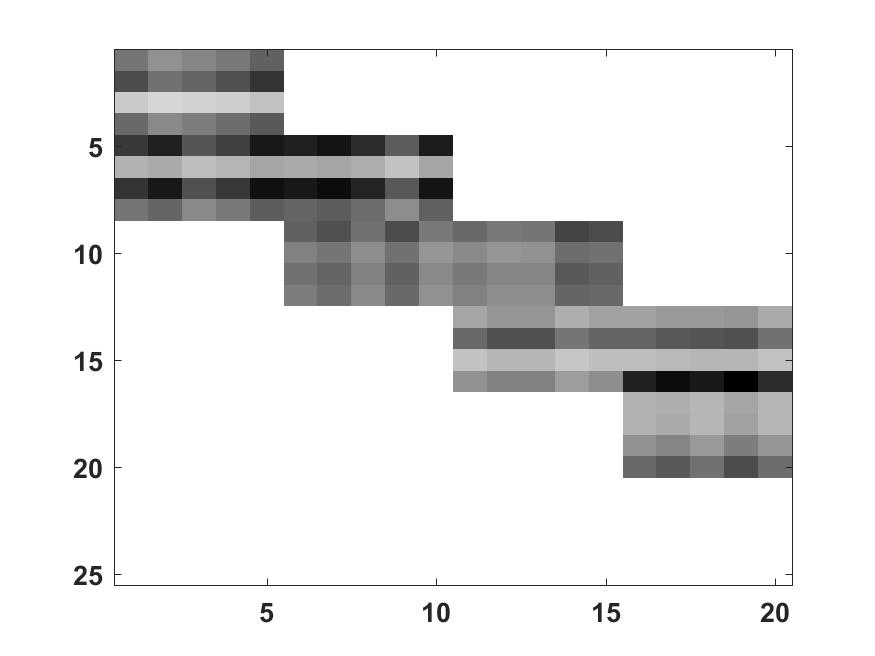}
\caption{Syn2 true}
\end{subfigure}
\begin{subfigure}[t]{0.24\textwidth}
\includegraphics[width=\textwidth, height=1.6in]{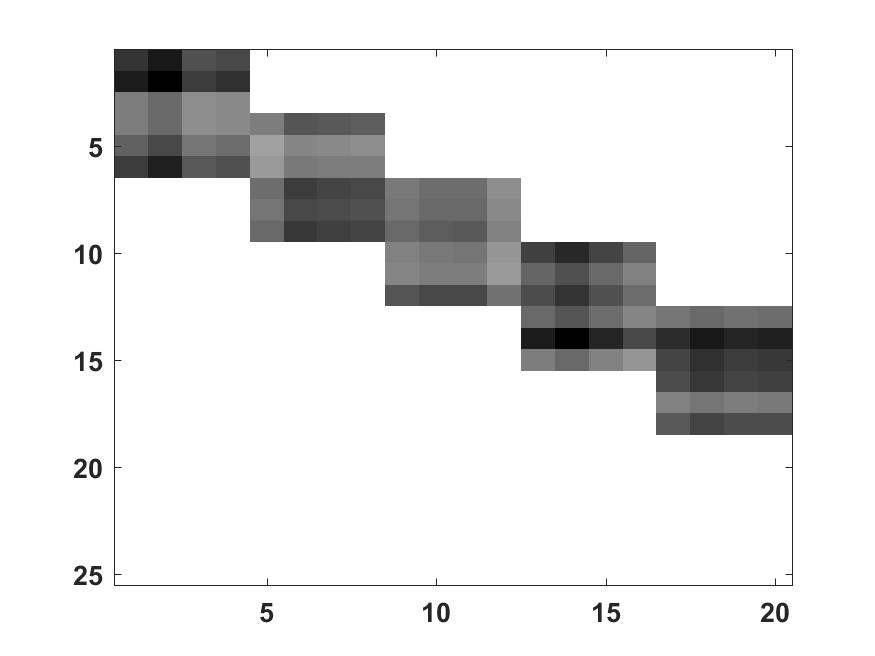}
\caption{Syn3 true}
\end{subfigure}
\begin{subfigure}[t]{0.24\textwidth}
\includegraphics[width=\textwidth, height=1.6in]{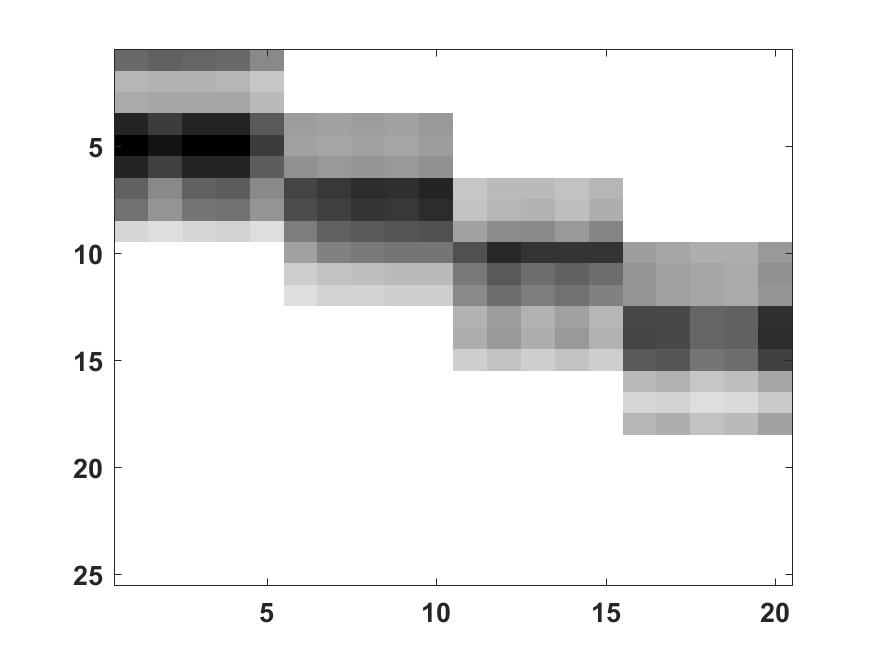}
\caption{Syn4 true}
\end{subfigure}
\\
\begin{subfigure}[t]{0.24\textwidth}
\includegraphics[width=\textwidth, height=1.6in]{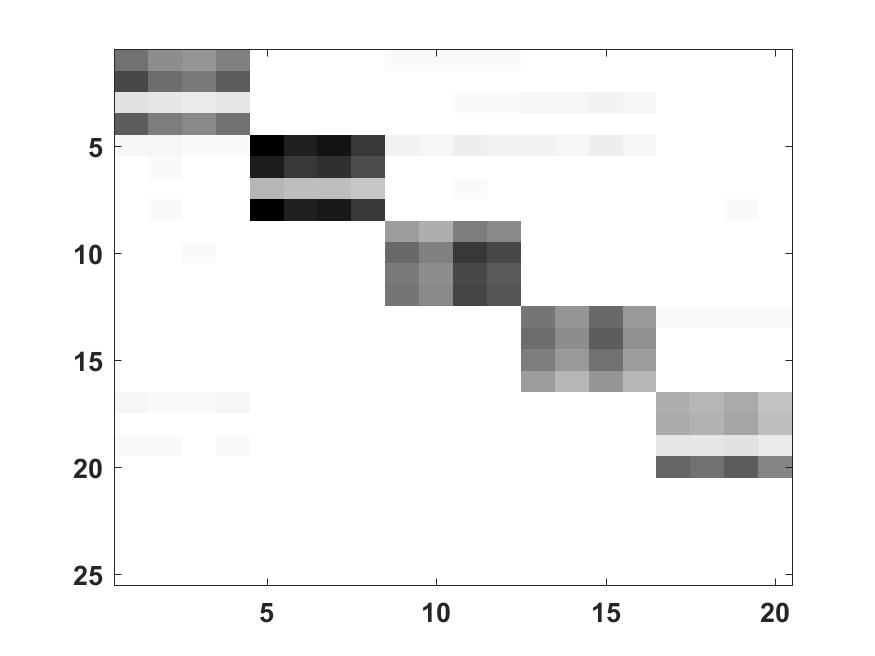}
\caption{Syn1 VSMTL $k$=1}
\end{subfigure}
\begin{subfigure}[t]{0.24\textwidth}
\includegraphics[width=\textwidth, height=1.6in]{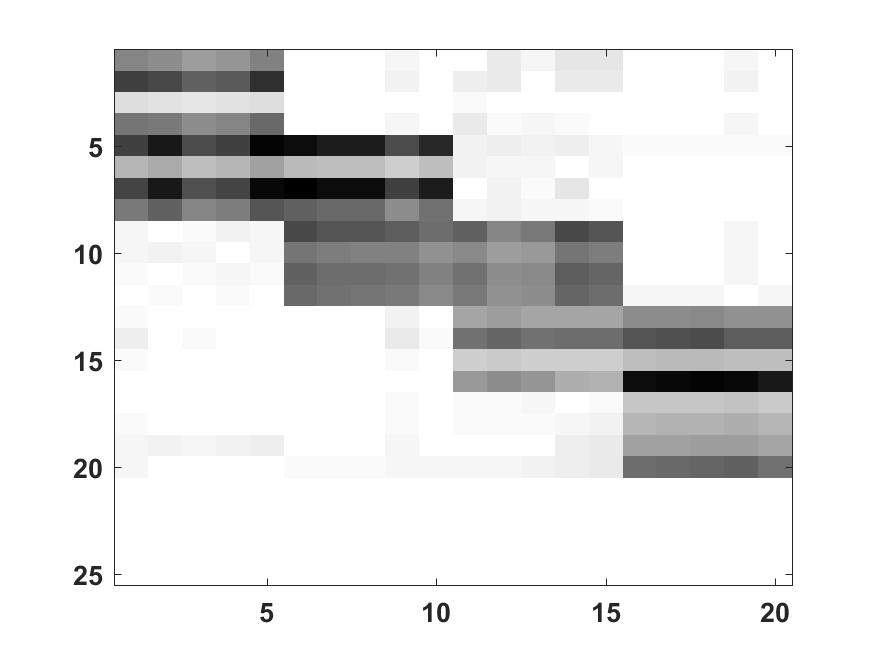}
\caption{Syn2 VSMTL $k$=1}
\end{subfigure}
\begin{subfigure}[t]{0.24\textwidth}
\includegraphics[width=\textwidth, height=1.6in]{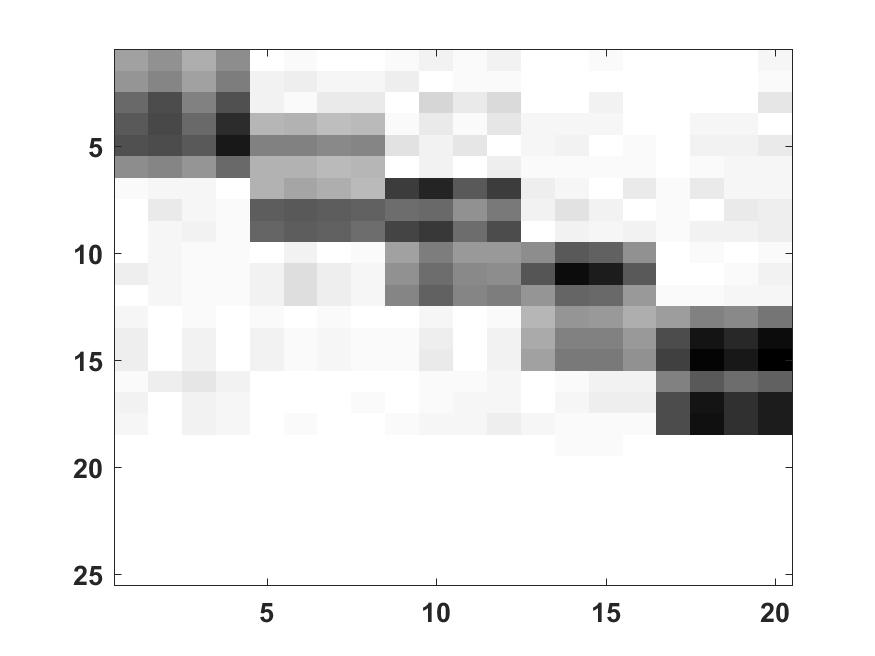}
\caption{Syn3 VSMTL $k$=3}
\end{subfigure}
\begin{subfigure}[t]{0.24\textwidth}
\includegraphics[width=\textwidth, height=1.6in]{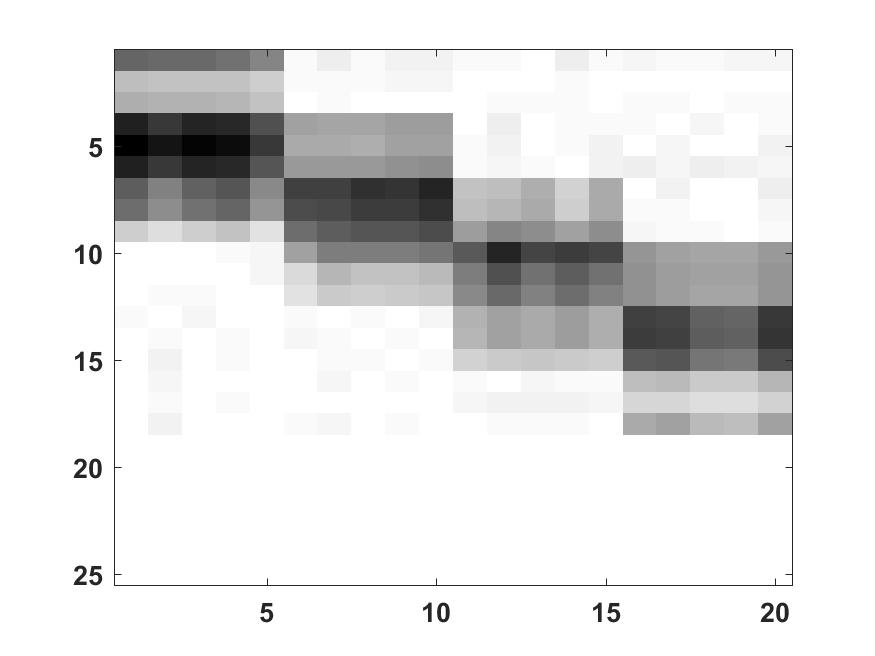}
\caption{Syn4 VSMTL $k$=3}
\end{subfigure}
\caption{True and estimated coefficient matrices by VSTG-MTL. 
The dark and white color entries indicate the large and zero values, respectively.}
\end{figure*}
Table 1 summarizes the experimental results for the four synthetic datasets in terms of RMSE and REE. 
For all the synthetic datasets, the MTL methods outperform the single-task learning method LASSO and VSTG-MTL exhibits the best performance.
Moreover, we can identify the effect of the $k$-support norm on the correlated features. 
On Syn1 and Syn2, where the latent bases $\textbf{u}_r$, $r=1,\ldots,K$ are orthogonal
, \textbf{VSTG-MTL $k$=1} outperforms  \textbf{VSTG-MTL $k$=3}.
This results indicates that the squared-$\ell_1$ norm penalty performs better than the squared $k$ support norm penalty with $k=3$ when the features are orthogonal. 
In contrast, on Syn3 and Syn4, where the latent bases $\textbf{u}_r$, $r=1,\ldots,K$ are not orthogonal,
\textbf{VSTG-MTL $k$=3} outperforms \textbf{VSTG-MTL $k$=1}. 
These results confirm to our premise that the $k$-support norm penalty can improve generalization performance more than the $\ell_1$ norm penalty when correlation exists.

The true coefficient matrix and estimated matrix using the proposed method are shown in Fig. 2, 
where the dark and white color entries indicate large and zero values, respectively.
VSTG-MTL can recover a group structure among tasks and exclude irrelevant variables.  
\subsection{Real Datasets}
We also evaluate the performance of  VSTG-MTL on the following five real datasets. 
After splitting the dataset into a training set and a test set, we transform the continuous input variables from the training set into $[-1,1]$ by dividing the maximums of their absolute values. Then, we divide the continuous input variables in the test set by using the same values as those in the training set. 
\begin{table*}[t]
\centering
\caption{Results form the real datasets for the RMSE and ER over 10 repetitions. The statistically best models are highlighted in bold.}
\label{tbl:real_datasets}
\begin{tabular}{llccccc}
\hline
Dataset & Measure &LASSO & L1+Trace & MMTFL & GO-MTL & VSTG-MTL $k$=opt
\\ \hline
School exam & RMSE & 12.0483 $\pm$ 0.1738 & 10.5041 $\pm$ 0.1432 & 10.1303 $\pm$ 0.1291  & 10.1924 $\pm$ 0.1331 & \textbf{9.9475 $\pm$ 0.1189} \\ 
Parkinson & & 2.9177 $\pm$ 0.0960 & 1.0481 $\pm$ 0.0243 & 1.1079 $\pm$ 0.0182 & 1.0231 $\pm$ 0.0285  & 1.0076 $\pm$ 0.0188 \\ 
Computer survey  &  &  2.3119 $\pm$ 0.3997
& 4.9493 $\pm$ 2.1592 & 1.7525 $\pm$ 0.1237 &  1.9067 $\pm$ 0.1864 & \textbf{1.6866 $\pm$ 0.1463}
\\ \hline
MNIST & ER &  13.0200 $\pm$ 0.7084 & 17.9800 $\pm$ 1.7574 & 12.6000 $\pm$ 0.8641 & 12.8400 $\pm$ 1.2989 & \textbf{11.7000 $\pm$ 1.4461}
\\ 
USPS & & 12.8800 $\pm$ 1.5061 & 16.0200 $\pm$ 1.2874 & 11.3600 $\pm$1.1462 & 12.9000 $\pm$ 1.0842 & 12.1800 $\pm$ 1.3547
\\ \hline
\end{tabular}
\end{table*}
\begin{itemize}
\item{\textbf{School exam dataset}\footnote{\url{http://ttic.uchicago.edu/~argyriou/code/index.html}
} \cite{dataset-student_exam}}: This multi-task regression dataset is obtained from the Inner London Education Authority.
It consists of an examination of 15362 students from 139 secondary schools in London during a three year period: 1985-1987. 
We have 139 tasks and 15362 observations, where each task and observation correspond to a prediction of the exam scores of a school and a student, respectively.
Each observation is represented by 3 continuous and 23 binary variables including school and student-specific attributes.
We follow the split procedure shown in \cite{MTL:low_rank:Argyriou2008}, resulting in a training set of 75\% observations and a test set of 25\% observations.
\item{\textbf{Parkinson's disease dataset} \footnote{\url{http://archive.ics.uci.edu/ml/datasets/Parkinsons+Telemonitoring}
} \cite{dataset-parkinson}}:
This multi-task regression dataset is obtained from biomedical voice measurements taken from 42 people with early-stage Parkinson's disease.
We have 42 tasks and 5875 observations, where each task and observation correspond to a prediction of the symptom score (motor UPDRS) for a patient and a record of a patient, respectively.
Each observation is represented by 19 continuous variables including age, gender, time interval, and voice measurements. 
We use 75\% of the observations as a training set and the remaining 25\% as a test set. 
\item {\textbf{Computer survey dataset} \footnote{\url{https://github.com/probml/pmtk3/tree/master/data/conjointAnalysisComputerBuyers}
} \cite{dataset-survey}}: This multi-output regression dataset is obtained from a survey of 190 ratings from people about their likelihood of purchasing each of the 20 different personal computers. 
We have 190 tasks and 20 observations shared for all tasks, where each task and observation correspond to a prediction of the integer ratings of a person on a scale of 0 to 10 and a computer.
Each observation is represented by 13 binary variables, including its specification.
We insert an additional variable to account for the bias term and use 75\% of the observations as a training set and the remaining 25\% as a test set. 
\item {\textbf{MNIST dataset}\footnote{\url{http://yann.lecun.com/exdb/mnist/}} }\cite{dataset:mnist}:
This multi-class classification dataset is obtained from 10 handwritten digits.
We have 10 tasks, 60,000 training observations and 10,000 test observations, where each task and observation correspond to a prediction of the digit and an image, respectively. 
Each observations is represented by $28\times 28$ variables and reduced to 64 dimensions using PCA.
Train, validation and test set are generated by randomly selecting 1,000 observations from the train set of and two sets of 500 observations from the test set, similar to the procedure of Kang et al. \cite{MTL:Low_rank_Kang2011}. 
\item{\textbf{USPS dataset}\footnote{\url{http://www.cad.zju.edu.cn/home/dengcai/Data/MLData.html}}} \cite{dataset:usps}:
This multi-class classification dataset is also obtained from the 10 handwritten digits.
We have 10 tasks, 7,291 training observations and 2,007 test observations, where each task and observation correspond to a prediction of the digit and an image, respectively.
Each observation is represented by $16\times 16$ variables and reduced to 87 dimensions using PCA. 
,We follow the same procedure of that used in the MNIST dataset to generate train, validation and test set, resulting in 1000, 500, and 500 observations, respectively. 
\end{itemize}
Table 2 summarizes the results for the five real datasets over 10 repetitions. \textbf{VSTG-MTL $k$=opt} outperforms the benchmark methods except the USPS dataset. 
This is especially true for the school exam dataset, the computer survey dataset, and the MNIST dataset, where the proposed method shows statistically significant improvements over the benchmark methods. 
\section{Conclusion}
This paper proposes a novel algorithm of VSTG-MTL, which simultaneously performs variable selection and learns an overlapping group structure among tasks.
VSTG-MTL factorizes a coefficient matrix into the product of low-rank matrices and impose sparsities on them while considering possible correlations. 
The resulting bi-convex constrained problem is transformed to a regularized problem that is solved by alternating optimization. 
We provide the upper bound on the excess risk of the proposed method. 
The experimental results show that the proposed VSTG-MTL method outperforms the benchmark methods on synthetic as well as real datasets.
\begin{acks}
This study was supported by National Research Foundation of Korea (NRF) grant funded by the Korean government (MEST) (No 2017R1A2B4005450)
\end{acks}

\bibliographystyle{ACM-Reference-Format}
\bibliography{reference} 


\begin{thebibliography}{36}


\ifx \showCODEN    \undefined \def \showCODEN     #1{\unskip}     \fi
\ifx \showDOI      \undefined \def \showDOI       #1{#1}\fi
\ifx \showISBNx    \undefined \def \showISBNx     #1{\unskip}     \fi
\ifx \showISBNxiii \undefined \def \showISBNxiii  #1{\unskip}     \fi
\ifx \showISSN     \undefined \def \showISSN      #1{\unskip}     \fi
\ifx \showLCCN     \undefined \def \showLCCN      #1{\unskip}     \fi
\ifx \shownote     \undefined \def \shownote      #1{#1}          \fi
\ifx \showarticletitle \undefined \def \showarticletitle #1{#1}   \fi
\ifx \showURL      \undefined \def \showURL       {\relax}        \fi
\providecommand\bibfield[2]{#2}
\providecommand\bibinfo[2]{#2}
\providecommand\natexlab[1]{#1}
\providecommand\showeprint[2][]{arXiv:#2}

\bibitem[\protect\citeauthoryear{Ando and Zhang}{Ando and Zhang}{2005}]%
        {MTL:low-rank_Ando2005}
\bibfield{author}{\bibinfo{person}{Rie~Kubota Ando} {and} \bibinfo{person}{Tong
  Zhang}.} \bibinfo{year}{2005}\natexlab{}.
\newblock \showarticletitle{A Framework for Learning Predictive Structures from
  Multiple Tasks and Unlabeled Data}.
\newblock \bibinfo{journal}{{\em Journal of Machine Learning Research\/}}
  \bibinfo{volume}{6} (\bibinfo{date}{Dec.} \bibinfo{year}{2005}),
  \bibinfo{pages}{1817--1853}.
\newblock
\showISSN{1532-4435}


\bibitem[\protect\citeauthoryear{Argyriou, Evgeniou, and Pontil}{Argyriou
  et~al\mbox{.}}{2008}]%
        {MTL:low_rank:Argyriou2008}
\bibfield{author}{\bibinfo{person}{Andreas Argyriou},
  \bibinfo{person}{Theodoros Evgeniou}, {and} \bibinfo{person}{Massimiliano
  Pontil}.} \bibinfo{year}{2008}\natexlab{}.
\newblock \showarticletitle{Convex Multi-Task Feature Learning}.
\newblock \bibinfo{journal}{{\em Machine Learning\/}} \bibinfo{volume}{73},
  \bibinfo{number}{3} (\bibinfo{date}{01 Dec} \bibinfo{year}{2008}),
  \bibinfo{pages}{243--272}.
\newblock
\showISSN{1573-0565}
\showDOI{%
\url{https://doi.org/10.1007/s10994-007-5040-8}}


\bibitem[\protect\citeauthoryear{Argyriou, Foygel, and Srebro}{Argyriou
  et~al\mbox{.}}{2012}]%
        {ksupport_nips2012}
\bibfield{author}{\bibinfo{person}{Andreas Argyriou}, \bibinfo{person}{Rina
  Foygel}, {and} \bibinfo{person}{Nathan Srebro}.}
  \bibinfo{year}{2012}\natexlab{}.
\newblock \showarticletitle{Sparse Prediction with the k-Support Norm}. In
  \bibinfo{booktitle}{{\em Advances in Neural Information Processing Systems
  25}} {\em (\bibinfo{series}{NIPS'12})},
  \bibfield{editor}{\bibinfo{person}{F.~Pereira}, \bibinfo{person}{C.~J.~C.
  Burges}, \bibinfo{person}{L.~Bottou}, {and} \bibinfo{person}{K.~Q.
  Weinberger}} (Eds.). \bibinfo{publisher}{Curran Associates, Inc.},
  \bibinfo{pages}{1457--1465}.
\newblock


\bibitem[\protect\citeauthoryear{Barzilai and Crammer}{Barzilai and
  Crammer}{2015}]%
        {MTL:low-rank_group_barzilai2015}
\bibfield{author}{\bibinfo{person}{Aviad Barzilai} {and} \bibinfo{person}{Koby
  Crammer}.} \bibinfo{year}{2015}\natexlab{}.
\newblock \showarticletitle{{Convex Multi-Task Learning by Clustering}}. In
  \bibinfo{booktitle}{{\em Proceedings of the 18th International Conference on
  Artificial Intelligence and Statistics}} {\em
  (\bibinfo{series}{AISTATS'2015})}, \bibfield{editor}{\bibinfo{person}{Guy
  Lebanon} {and} \bibinfo{person}{S.~V.~N. Vishwanathan}} (Eds.),
  Vol.~\bibinfo{volume}{38}. \bibinfo{publisher}{PMLR}, \bibinfo{address}{San
  Diego, California, USA}, \bibinfo{pages}{65--73}.
\newblock


\bibitem[\protect\citeauthoryear{Beck and Teboulle}{Beck and Teboulle}{2009}]%
        {FISTA:Beck2009}
\bibfield{author}{\bibinfo{person}{Amir Beck} {and} \bibinfo{person}{Marc
  Teboulle}.} \bibinfo{year}{2009}\natexlab{}.
\newblock \showarticletitle{A Fast Iterative Shrinkage-Thresholding Algorithm
  for Linear Inverse Problems}.
\newblock \bibinfo{journal}{{\em SIAM Journal on Imaging Sciences\/}}
  \bibinfo{volume}{2}, \bibinfo{number}{1} (\bibinfo{year}{2009}),
  \bibinfo{pages}{183--202}.
\newblock
\showDOI{%
\url{https://doi.org/10.1137/080716542}}


\bibitem[\protect\citeauthoryear{Boyd, Parikh, Chu, Peleato, and Eckstein}{Boyd
  et~al\mbox{.}}{2011}]%
        {ADMM_Boyd_2011}
\bibfield{author}{\bibinfo{person}{Stephen Boyd}, \bibinfo{person}{Neal
  Parikh}, \bibinfo{person}{Eric Chu}, \bibinfo{person}{Borja Peleato}, {and}
  \bibinfo{person}{Jonathan Eckstein}.} \bibinfo{year}{2011}\natexlab{}.
\newblock \showarticletitle{Distributed Optimization and Statistical Learning
  via the Alternating Direction Method of Multipliers}.
\newblock \bibinfo{journal}{{\em Foundations and Trends{\textregistered} in
  Machine Learning\/}} \bibinfo{volume}{3}, \bibinfo{number}{1}
  (\bibinfo{date}{Jan.} \bibinfo{year}{2011}), \bibinfo{pages}{1--122}.
\newblock
\showISSN{1935-8237}
\showDOI{%
\url{https://doi.org/10.1561/2200000016}}


\bibitem[\protect\citeauthoryear{Caruana}{Caruana}{1997}]%
        {MTL:Caruana1997}
\bibfield{author}{\bibinfo{person}{Rich Caruana}.}
  \bibinfo{year}{1997}\natexlab{}.
\newblock \showarticletitle{Multitask Learning}.
\newblock \bibinfo{journal}{{\em Machine Learning\/}} \bibinfo{volume}{28},
  \bibinfo{number}{1} (\bibinfo{date}{01 Jul} \bibinfo{year}{1997}),
  \bibinfo{pages}{41--75}.
\newblock
\showISSN{1573-0565}
\showDOI{%
\url{https://doi.org/10.1023/A:1007379606734}}


\bibitem[\protect\citeauthoryear{Chen and Z.Huang}{Chen and Z.Huang}{2012}]%
        {MOR:JASA_Chen2012}
\bibfield{author}{\bibinfo{person}{Lisha Chen} {and} \bibinfo{person}{Jianhua
  Z.Huang}.} \bibinfo{year}{2012}\natexlab{}.
\newblock \showarticletitle{Sparse Reduced-Rank Regression for Simultaneous
  Dimension Reduction and Variable Selection}.
\newblock \bibinfo{journal}{{\it J. Amer. Statist. Assoc.}}
  \bibinfo{volume}{107}, \bibinfo{number}{500} (\bibinfo{year}{2012}),
  \bibinfo{pages}{1533--1545}.
\newblock
\showDOI{%
\url{https://doi.org/10.1080/01621459.2012.734178}}


\bibitem[\protect\citeauthoryear{Goldstein}{Goldstein}{1991}]%
        {dataset-student_exam}
\bibfield{author}{\bibinfo{person}{Harvey Goldstein}.}
  \bibinfo{year}{1991}\natexlab{}.
\newblock \showarticletitle{Multilevel Modelling of Survey Data}.
\newblock \bibinfo{journal}{{\em Journal of the Royal Statistical Society.
  Series D (The Statistician)\/}} \bibinfo{volume}{40}, \bibinfo{number}{2}
  (\bibinfo{year}{1991}), \bibinfo{pages}{235--244}.
\newblock
\showISSN{00390526, 14679884}
\showURL{%
\url{http://www.jstor.org/stable/2348496}}


\bibitem[\protect\citeauthoryear{Han and Zhang}{Han and Zhang}{2015}]%
        {MTL:TAT_Han2015}
\bibfield{author}{\bibinfo{person}{Lei Han} {and} \bibinfo{person}{Yu Zhang}.}
  \bibinfo{year}{2015}\natexlab{}.
\newblock \showarticletitle{Learning Tree Structure in Multi-Task Learning}. In
  \bibinfo{booktitle}{{\em Proceedings of the 21th ACM SIGKDD International
  Conference on Knowledge Discovery and Data Mining}} {\em
  (\bibinfo{series}{KDD'15})}. \bibinfo{publisher}{ACM}, \bibinfo{address}{New
  York, NY, USA}, \bibinfo{pages}{397--406}.
\newblock
\showISBNx{978-1-4503-3664-2}
\showDOI{%
\url{https://doi.org/10.1145/2783258.2783393}}


\bibitem[\protect\citeauthoryear{Han and Zhang}{Han and Zhang}{2016}]%
        {MTL:Low-rank_Han2016}
\bibfield{author}{\bibinfo{person}{Lei Han} {and} \bibinfo{person}{Yu Zhang}.}
  \bibinfo{year}{2016}\natexlab{}.
\newblock \showarticletitle{Multi-Stage Multi-Task Learning with Reduced Rank}.
  In \bibinfo{booktitle}{{\em Proceedings of the 30th AAAI Conference on
  Artificial Intelligence}} {\em (\bibinfo{series}{AAAI'16})}.
  \bibinfo{publisher}{AAAI Press}, \bibinfo{pages}{1638--1644}.
\newblock


\bibitem[\protect\citeauthoryear{Hernandez-Lobato, Hernandez-Lobato, and
  Ghahramani}{Hernandez-Lobato et~al\mbox{.}}{2015}]%
        {MTL:variable_pdirty_Lobato2015}
\bibfield{author}{\bibinfo{person}{Daniel Hernandez-Lobato},
  \bibinfo{person}{Jose~Miguel Hernandez-Lobato}, {and} \bibinfo{person}{Zoubin
  Ghahramani}.} \bibinfo{year}{2015}\natexlab{}.
\newblock \showarticletitle{A Probabilistic Model for Dirty Multi-task Feature
  Selection}. In \bibinfo{booktitle}{{\em Proceedings of the 32nd International
  Conference on Machine Learning}} {\em (\bibinfo{series}{ICML'15})},
  \bibfield{editor}{\bibinfo{person}{Francis Bach} {and} \bibinfo{person}{David
  Blei}} (Eds.), Vol.~\bibinfo{volume}{37}. \bibinfo{publisher}{PMLR},
  \bibinfo{address}{Lille, France}, \bibinfo{pages}{1073--1082}.
\newblock


\bibitem[\protect\citeauthoryear{Hull}{Hull}{1994}]%
        {dataset:usps}
\bibfield{author}{\bibinfo{person}{Jonathan~J. Hull}.}
  \bibinfo{year}{1994}\natexlab{}.
\newblock \showarticletitle{A database for handwritten text recognition
  research}.
\newblock \bibinfo{journal}{{\em IEEE Transactions on Pattern Analysis and
  Machine Intelligence\/}} \bibinfo{volume}{16}, \bibinfo{number}{5}
  (\bibinfo{date}{May} \bibinfo{year}{1994}), \bibinfo{pages}{550--554}.
\newblock
\showISSN{0162-8828}
\showDOI{%
\url{https://doi.org/10.1109/34.291440}}


\bibitem[\protect\citeauthoryear{Jacob, Bach, and Vert}{Jacob
  et~al\mbox{.}}{2008}]%
        {MTL:grouping_Jacob2008}
\bibfield{author}{\bibinfo{person}{Laurent Jacob}, \bibinfo{person}{Francis
  Bach}, {and} \bibinfo{person}{Jean-Philippe Vert}.}
  \bibinfo{year}{2008}\natexlab{}.
\newblock \showarticletitle{Clustered Multi-Task Learning: A Convex
  Formulation}. In \bibinfo{booktitle}{{\em Advances in Neural Information
  Processing Systems 21}} {\em (\bibinfo{series}{NIPS'08})}.
  \bibinfo{publisher}{Curran Associates, Inc.}, \bibinfo{address}{USA},
  \bibinfo{pages}{745--752}.
\newblock
\showISBNx{978-1-6056-0-949-2}


\bibitem[\protect\citeauthoryear{Jalali, Ravikumar, Sanghavi, and Ruan}{Jalali
  et~al\mbox{.}}{2010}]%
        {MTL:variable_dirty_Jalali2010}
\bibfield{author}{\bibinfo{person}{Ali Jalali}, \bibinfo{person}{Pradeep
  Ravikumar}, \bibinfo{person}{Sujay Sanghavi}, {and} \bibinfo{person}{Chao
  Ruan}.} \bibinfo{year}{2010}\natexlab{}.
\newblock \showarticletitle{A Dirty Model for Multi-Task Learning}. In
  \bibinfo{booktitle}{{\em Advances in Neural Information Processing Systems
  23}} {\em (\bibinfo{series}{NIPS'10})}. \bibinfo{publisher}{Curran
  Associates, Inc.}, \bibinfo{address}{USA}, \bibinfo{pages}{964--972}.
\newblock


\bibitem[\protect\citeauthoryear{Kang, Grauman, and Sha}{Kang
  et~al\mbox{.}}{2011a}]%
        {MTL:Low_rank_Kang2011}
\bibfield{author}{\bibinfo{person}{Zhuoliang Kang}, \bibinfo{person}{Kristen
  Grauman}, {and} \bibinfo{person}{Fei Sha}.} \bibinfo{year}{2011}\natexlab{a}.
\newblock \showarticletitle{Learning with Whom to Share in Multi-task Feature
  Learning}. In \bibinfo{booktitle}{{\em Proceedings of the 28th International
  Conference on Machine Learning}} {\em (\bibinfo{series}{ICML'11})}.
  \bibinfo{publisher}{Omnipress}, \bibinfo{address}{USA},
  \bibinfo{pages}{521--528}.
\newblock
\showISBNx{978-1-4503-0619-5}


\bibitem[\protect\citeauthoryear{Kang, Grauman, and Sha}{Kang
  et~al\mbox{.}}{2011b}]%
        {MTL:low_rank_grouping_Kang2011}
\bibfield{author}{\bibinfo{person}{Zhuoliang Kang}, \bibinfo{person}{Kristen
  Grauman}, {and} \bibinfo{person}{Fei Sha}.} \bibinfo{year}{2011}\natexlab{b}.
\newblock \showarticletitle{Learning with Whom to Share in Multi-task Feature
  Learning}. In \bibinfo{booktitle}{{\em Proceedings of the 28th International
  Conference on Machine Learning}} {\em (\bibinfo{series}{ICML'11})}.
  \bibinfo{publisher}{Omnipress}, \bibinfo{address}{USA},
  \bibinfo{pages}{521--528}.
\newblock
\showISBNx{978-1-4503-0619-5}


\bibitem[\protect\citeauthoryear{Kumar and Daum{\'e}~III}{Kumar and
  Daum{\'e}~III}{2012}]%
        {MTL:low_rank_group_Kumar2012}
\bibfield{author}{\bibinfo{person}{Abhishek Kumar} {and} \bibinfo{person}{Hal
  Daum{\'e}~III}.} \bibinfo{year}{2012}\natexlab{}.
\newblock \showarticletitle{Learning Task Grouping and Overlap in Multi-Task
  Learning}. In \bibinfo{booktitle}{{\em Proceedings of the 29th International
  Conference on Machine Learning}} {\em (\bibinfo{series}{ICML'12})},
  \bibfield{editor}{\bibinfo{person}{John Langford} {and}
  \bibinfo{person}{Joelle Pineau}} (Eds.). \bibinfo{publisher}{Omnipress},
  \bibinfo{address}{NY, USA}, \bibinfo{pages}{1383--1390}.
\newblock
\showISBNx{978-1-4503-1285-1}


\bibitem[\protect\citeauthoryear{Lecun, Bottou, Bengio, and Haffner}{Lecun
  et~al\mbox{.}}{1998}]%
        {dataset:mnist}
\bibfield{author}{\bibinfo{person}{Yann Lecun}, \bibinfo{person}{Leon Bottou},
  \bibinfo{person}{Yoshua Bengio}, {and} \bibinfo{person}{Patrick Haffner}.}
  \bibinfo{year}{1998}\natexlab{}.
\newblock \showarticletitle{Gradient-based learning applied to document
  recognition}.
\newblock \bibinfo{journal}{{\it Proc. IEEE}} \bibinfo{volume}{86},
  \bibinfo{number}{11} (\bibinfo{date}{Nov} \bibinfo{year}{1998}),
  \bibinfo{pages}{2278--2324}.
\newblock
\showISSN{0018-9219}
\showDOI{%
\url{https://doi.org/10.1109/5.726791}}


\bibitem[\protect\citeauthoryear{Lee, Yang, and Hwang}{Lee
  et~al\mbox{.}}{2016}]%
        {MTL:asmtl_Giwoong2016}
\bibfield{author}{\bibinfo{person}{Giwoong Lee}, \bibinfo{person}{Eunho Yang},
  {and} \bibinfo{person}{Sung Hwang}.} \bibinfo{year}{2016}\natexlab{}.
\newblock \showarticletitle{Asymmetric Multi-task Learning Based on Task
  Relatedness and Loss}. In \bibinfo{booktitle}{{\em Proceedings of the 33rd
  International Conference on Machine Learning}} {\em
  (\bibinfo{series}{ICML'16})},
  \bibfield{editor}{\bibinfo{person}{Maria~Florina Balcan} {and}
  \bibinfo{person}{Kilian~Q. Weinberger}} (Eds.), Vol.~\bibinfo{volume}{48}.
  \bibinfo{publisher}{PMLR}, \bibinfo{address}{New York, USA},
  \bibinfo{pages}{230--238}.
\newblock


\bibitem[\protect\citeauthoryear{Lenk, DeSarbo, Green, and Young}{Lenk
  et~al\mbox{.}}{1996}]%
        {dataset-survey}
\bibfield{author}{\bibinfo{person}{Peter~J. Lenk}, \bibinfo{person}{Wayne~S.
  DeSarbo}, \bibinfo{person}{Paul~E. Green}, {and} \bibinfo{person}{Martin~R.
  Young}.} \bibinfo{year}{1996}\natexlab{}.
\newblock \showarticletitle{Hierarchical Bayes Conjoint Analysis: Recovery of
  Partworth Heterogeneity from Reduced Experimental Designs}.
\newblock \bibinfo{journal}{{\em Marketing Science\/}} \bibinfo{volume}{15},
  \bibinfo{number}{2} (\bibinfo{year}{1996}), \bibinfo{pages}{173--191}.
\newblock
\showISSN{07322399, 1526548X}
\showDOI{%
\url{https://doi.org/10.1287/mksc.15.2.173}}


\bibitem[\protect\citeauthoryear{Liu, Palatucci, and Zhang}{Liu
  et~al\mbox{.}}{2009}]%
        {MTL:variable_infnorm_Liu2009}
\bibfield{author}{\bibinfo{person}{Han Liu}, \bibinfo{person}{Mark Palatucci},
  {and} \bibinfo{person}{Jian Zhang}.} \bibinfo{year}{2009}\natexlab{}.
\newblock \showarticletitle{Blockwise Coordinate Descent Procedures for the
  Multi-Task Lasso, with Applications to Neural Semantic Basis Discovery}. In
  \bibinfo{booktitle}{{\em Proceedings of the 26th International Conference on
  Machine Learning}} {\em (\bibinfo{series}{ICML'09})}.
  \bibinfo{publisher}{ACM}, \bibinfo{address}{New York, NY, USA},
  \bibinfo{pages}{649--656}.
\newblock
\showISBNx{978-1-60558-516-1}


\bibitem[\protect\citeauthoryear{Maurer, Pontil, and Romera-Paredes}{Maurer
  et~al\mbox{.}}{2016}]%
        {MTL:low_rank_bound_Maurer2016}
\bibfield{author}{\bibinfo{person}{Andreas Maurer},
  \bibinfo{person}{Massimiliano Pontil}, {and} \bibinfo{person}{Bernardino
  Romera-Paredes}.} \bibinfo{year}{2016}\natexlab{}.
\newblock \showarticletitle{The Benefit of Multitask Representation Learning}.
\newblock \bibinfo{journal}{{\em Journal of Machine Learning Research\/}}
  \bibinfo{volume}{17}, \bibinfo{number}{81} (\bibinfo{year}{2016}),
  \bibinfo{pages}{1--32}.
\newblock


\bibitem[\protect\citeauthoryear{McDonald, Pontil, and Stamos}{McDonald
  et~al\mbox{.}}{2016}]%
        {MTL:low-rank_spectral_ksupport_McDonald2016}
\bibfield{author}{\bibinfo{person}{Andrew~M. McDonald},
  \bibinfo{person}{Massimiliano Pontil}, {and} \bibinfo{person}{Dimitris
  Stamos}.} \bibinfo{year}{2016}\natexlab{}.
\newblock \showarticletitle{New Perspectives on K-support and Cluster Norms}.
\newblock \bibinfo{journal}{{\em Journal of Machine Learning Research\/}}
  \bibinfo{volume}{17}, \bibinfo{number}{1} (\bibinfo{date}{Jan.}
  \bibinfo{year}{2016}), \bibinfo{pages}{5376--5413}.
\newblock
\showISSN{1532-4435}


\bibitem[\protect\citeauthoryear{Mishra, Meyer, Bach, and Sepulchre}{Mishra
  et~al\mbox{.}}{2013}]%
        {MTL:low-rank_trace_Mishra2013}
\bibfield{author}{\bibinfo{person}{Bamdev Mishra}, \bibinfo{person}{Gilles
  Meyer}, \bibinfo{person}{Francis Bach}, {and} \bibinfo{person}{Rodolphe
  Sepulchre}.} \bibinfo{year}{2013}\natexlab{}.
\newblock \showarticletitle{Low-Rank Optimization with Trace Norm Penalty}.
\newblock \bibinfo{journal}{{\em SIAM Journal on Optimization\/}}
  \bibinfo{volume}{23}, \bibinfo{number}{4} (\bibinfo{year}{2013}),
  \bibinfo{pages}{2124--2149}.
\newblock
\showDOI{%
\url{https://doi.org/10.1137/110859646}}


\bibitem[\protect\citeauthoryear{Nocedal and Wright}{Nocedal and
  Wright}{2006}]%
        {Numerical_opt}
\bibfield{author}{\bibinfo{person}{Jorge Nocedal} {and}
  \bibinfo{person}{Stephen~J. Wright}.} \bibinfo{year}{2006}\natexlab{}.
\newblock \bibinfo{booktitle}{{\em Numerical Optimization (2nd. ed.)}}.
\newblock \bibinfo{publisher}{Springer-Verlag New York}.
\newblock
\showDOI{%
\url{https://doi.org/10.1007/978-0-387-40065-5}}


\bibitem[\protect\citeauthoryear{Obozinski, Taskar, and Jordan}{Obozinski
  et~al\mbox{.}}{2010}]%
        {MTL:Obozinski2010}
\bibfield{author}{\bibinfo{person}{Guillaume Obozinski}, \bibinfo{person}{Ben
  Taskar}, {and} \bibinfo{person}{Michael~I. Jordan}.}
  \bibinfo{year}{2010}\natexlab{}.
\newblock \showarticletitle{Joint Covariate selection and Joint Subspace
  Selection for Multiple Classification Problems}.
\newblock \bibinfo{journal}{{\em Statistics and Computing\/}}
  \bibinfo{volume}{20}, \bibinfo{number}{2} (\bibinfo{date}{01 Apr}
  \bibinfo{year}{2010}), \bibinfo{pages}{231--252}.
\newblock
\showISSN{1573-1375}
\showDOI{%
\url{https://doi.org/10.1007/s11222-008-9111-x}}


\bibitem[\protect\citeauthoryear{Parikh and Boyd}{Parikh and Boyd}{2014}]%
        {proximal_FTO_2014}
\bibfield{author}{\bibinfo{person}{Neal Parikh} {and} \bibinfo{person}{Stephen
  Boyd}.} \bibinfo{year}{2014}\natexlab{}.
\newblock \showarticletitle{Proximal Algorithms}.
\newblock \bibinfo{journal}{{\em Foundations and Trends{\textregistered} in
  Optimization\/}} \bibinfo{volume}{1}, \bibinfo{number}{3}
  (\bibinfo{year}{2014}), \bibinfo{pages}{127--239}.
\newblock
\showISSN{2167-3888}
\showDOI{%
\url{https://doi.org/10.1561/2400000003}}


\bibitem[\protect\citeauthoryear{Richard, BACH, and Vert}{Richard
  et~al\mbox{.}}{2013}]%
        {MTL:low_rank_l1_trace_Richard2013}
\bibfield{author}{\bibinfo{person}{Emile Richard}, \bibinfo{person}{Francis
  BACH}, {and} \bibinfo{person}{Jean-Philippe Vert}.}
  \bibinfo{year}{2013}\natexlab{}.
\newblock \showarticletitle{Intersecting Singularities for Multi-Structured
  Estimation}. In \bibinfo{booktitle}{{\em Proceedings of the 30th
  International Conference on Machine Learning}} {\em
  (\bibinfo{series}{ICML'13})}, \bibfield{editor}{\bibinfo{person}{Sanjoy
  Dasgupta} {and} \bibinfo{person}{David McAllester}} (Eds.),
  Vol.~\bibinfo{volume}{28}. \bibinfo{publisher}{PMLR},
  \bibinfo{address}{Atlanta, Georgia, USA}, \bibinfo{pages}{1157--1165}.
\newblock


\bibitem[\protect\citeauthoryear{Richard, Savalle, and Vayatis}{Richard
  et~al\mbox{.}}{2012}]%
        {MTL:low_rank_l1_trace_Richard2012}
\bibfield{author}{\bibinfo{person}{Emile Richard},
  \bibinfo{person}{Pierre-Andr{\'e} Savalle}, {and} \bibinfo{person}{Nicolas
  Vayatis}.} \bibinfo{year}{2012}\natexlab{}.
\newblock \showarticletitle{Estimation of Simultaneously Sparse and Low Rank
  Matrices}. In \bibinfo{booktitle}{{\em Proceedings of the 29th International
  Conference on Machine Learning}} {\em (\bibinfo{series}{ICML'12})}.
  \bibinfo{publisher}{Omnipress}, \bibinfo{address}{USA},
  \bibinfo{pages}{51--58}.
\newblock
\showISBNx{978-1-4503-1285-1}


\bibitem[\protect\citeauthoryear{Tsanas, Little, McSharry, and Ramig}{Tsanas
  et~al\mbox{.}}{2010}]%
        {dataset-parkinson}
\bibfield{author}{\bibinfo{person}{Athanasios Tsanas}, \bibinfo{person}{Max~A.
  Little}, \bibinfo{person}{Patrick~E. McSharry}, {and}
  \bibinfo{person}{Lorraine~O. Ramig}.} \bibinfo{year}{2010}\natexlab{}.
\newblock \showarticletitle{Accurate Telemonitoring of Parkinson's Disease
  Progression by Noninvasive Speech Tests}.
\newblock \bibinfo{journal}{{\em IEEE Transactions on Biomedical
  Engineering\/}} \bibinfo{volume}{57}, \bibinfo{number}{4}
  (\bibinfo{date}{April} \bibinfo{year}{2010}), \bibinfo{pages}{884--893}.
\newblock
\showISSN{0018-9294}
\showDOI{%
\url{https://doi.org/10.1109/TBME.2009.2036000}}


\bibitem[\protect\citeauthoryear{Wang, Bi, Yu, Sun, and Song}{Wang
  et~al\mbox{.}}{2016}]%
        {MTL:MMTFL_Wang2016}
\bibfield{author}{\bibinfo{person}{Xin Wang}, \bibinfo{person}{Jinbo Bi},
  \bibinfo{person}{Shipeng Yu}, \bibinfo{person}{Jiangwen Sun}, {and}
  \bibinfo{person}{Minghu Song}.} \bibinfo{year}{2016}\natexlab{}.
\newblock \showarticletitle{Multiplicative Multitask Feature Learning}.
\newblock \bibinfo{journal}{{\em Journal of Machine Learning Research\/}}
  \bibinfo{volume}{17}, \bibinfo{number}{80} (\bibinfo{year}{2016}),
  \bibinfo{pages}{1--33}.
\newblock


\bibitem[\protect\citeauthoryear{Zhang and Yang}{Zhang and Yang}{2017}]%
        {MTL:survey17}
\bibfield{author}{\bibinfo{person}{Yu Zhang} {and} \bibinfo{person}{Qiang
  Yang}.} \bibinfo{year}{2017}\natexlab{}.
\newblock \showarticletitle{A Survey on Multi-Task Learning}.
\newblock \bibinfo{journal}{{\em CoRR\/}}  \bibinfo{volume}{abs/1707.08114}
  (\bibinfo{year}{2017}).
\newblock
\showeprint[arxiv]{1707.08114}
\showURL{%
\url{http://arxiv.org/abs/1707.08114}}


\bibitem[\protect\citeauthoryear{Zhou, Chen, and Ye}{Zhou
  et~al\mbox{.}}{2011}]%
        {MTL:grouping_Zhou2011}
\bibfield{author}{\bibinfo{person}{Jiayu Zhou}, \bibinfo{person}{Jianhui Chen},
  {and} \bibinfo{person}{Jieping Ye}.} \bibinfo{year}{2011}\natexlab{}.
\newblock \showarticletitle{Clustered Multi-Task Learning via Alternating
  Structure Optimization}. In \bibinfo{booktitle}{{\em Advances in Neural
  Information Processing Systems 24}} {\em (\bibinfo{series}{NIPS'11})}.
  \bibinfo{publisher}{Curran Associates, Inc.}, \bibinfo{pages}{702--710}.
\newblock
\showISBNx{978-1-61839-599-3}


\bibitem[\protect\citeauthoryear{Zhou and Zhao}{Zhou and Zhao}{2016}]%
        {MTL:groping_TPAMI_Zhou2016}
\bibfield{author}{\bibinfo{person}{Qiang Zhou} {and} \bibinfo{person}{Qi
  Zhao}.} \bibinfo{year}{2016}\natexlab{}.
\newblock \showarticletitle{Flexible Clustered Multi-Task Learning by Learning
  Representative Tasks}.
\newblock \bibinfo{journal}{{\em IEEE Transactions on Pattern Analysis and
  Machine Intelligence\/}} \bibinfo{volume}{38}, \bibinfo{number}{2}
  (\bibinfo{date}{Feb} \bibinfo{year}{2016}), \bibinfo{pages}{266--278}.
\newblock
\showISSN{0162-8828}
\showDOI{%
\url{https://doi.org/10.1109/TPAMI.2015.2452911}}


\bibitem[\protect\citeauthoryear{Zou and Hastie}{Zou and Hastie}{2005}]%
        {elastic_net}
\bibfield{author}{\bibinfo{person}{Hui Zou} {and} \bibinfo{person}{Trevor
  Hastie}.} \bibinfo{year}{2005}\natexlab{}.
\newblock \showarticletitle{Regularization and variable selection via the
  elastic net}.
\newblock \bibinfo{journal}{{\em Journal of the Royal Statistical Society:
  Series B (Statistical Methodology)\/}} \bibinfo{volume}{67},
  \bibinfo{number}{2} (\bibinfo{year}{2005}), \bibinfo{pages}{301--320}.
\newblock
\showISSN{1467-9868}
\showDOI{%
\url{https://doi.org/10.1111/j.1467-9868.2005.00503.x}}


\end{thebibliography}

\end{document}